\theoremstyle{plain}
\newtheorem*{theorem*}{Theorem}
\newtheorem{theorem}{Theorem}[section]
\newtheorem{propos}[theorem]{Proposition}
\newtheorem{corollary}[theorem]{Corollary}
\theoremstyle{definition}
\theoremstyle{remark}
\newcommand{\resize}[2]{\resizebox{#1}{!}{#2}}
\begin{document}

\title{Equivariant Neural Operator Learning with Graphon Convolution}
%

\author{%
Chaoran Cheng \\
University of Illinois Urbana-Champaign \\
\texttt{chaoran7@illinois.edu} \\
\And
Jian Peng \\
University of Illinois Urbana-Champaign \\
\texttt{jianpeng@illinois.edu} \\
}

\maketitle

\begin{abstract}
   We propose a general architecture that combines the coefficient learning scheme with a residual operator layer for learning mappings between continuous functions in the 3D Euclidean space. Our proposed model is guaranteed to achieve SE(3)-equivariance by design. From the graph spectrum view, our method can be interpreted as convolution on graphons (dense graphs with infinitely many nodes), which we term \emph{InfGCN}. By leveraging both the continuous graphon structure and the discrete graph structure of the input data, our model can effectively capture the geometric information while preserving equivariance. Through extensive experiments on large-scale electron density datasets, we observed that our model significantly outperformed the current state-of-the-art architectures. Multiple ablation studies were also carried out to demonstrate the effectiveness of the proposed architecture.
\end{abstract}

\section{Introduction}
Continuous functions in the 3D Euclidean space are widely encountered in science and engineering domains, and learning the mappings between these functions has potentially an amplitude of applications.
For example, the Schrödinger equation for the wave-like behavior of the electron in a molecule, the Helmholtz equation for the time-independent wave functions, and the Navier–Stokes equation for the dynamics of fluids all output a continuous function spanning over $\mathbb{R}^3$ given the initial input. The discrete structure like the coordinates of the atoms, sources, and sinks also provides crucial information.
Several works have demonstrated the rich geometric information of these data to boost the performance of other machine learning models, e.g., incorporating electron density data to better predict the physical properties of molecules \cite{Bogojeski2020,doi:10.1126/science.abj6511,PhysRevLett.125.206401QDF}. 

It is common that these data themselves have inherently complicated 3D geometric structures. Work on directly predicting these structures, however, remains few. The traditional ways of obtaining such continuous data often rely on quantum chemical computation as the approximation method to solve ODEs and PDEs. For example, the ground truth electron density is often obtained with \textit{ab initio} methods \cite{PhysRev.32.339HF,doi:10.1142/S0217979203020442CC} with accurate results but an $N^7$ computational scaling, making it prohibitive or inefficient for large molecules. Other methods like the Kohn-Sham density functional theory (KS-DFT) \cite{PhysRev.140.A1133KS} has an $N^3$ computational scaling with a relatively large error. Therefore, building an efficient and accurate machine learning-based electron density estimator will have a positive impact on this realm.

Similar to the crucial concept of \emph{equivariance} for discrete 3D scenarios, we can also define equivariance for a function defined on $\mathbb{R}^3$ as the property that the output transforms in accordance with the transformation on the input data. The equivariance property demonstrates the robustness of the model in the sense that it is independent of the poses of the input structure, thus also serving as an implicit way of data augmentation such that the model is trained on the whole trajectory of the input sample. Equivariance on point clouds can be obtained with vector neuron-based models \cite{DBLP:conf/iccv/DengLDPTG21,DBLP:conf/iclr/JingESTD21,DBLP:conf/icml/SchuttUG21,DBLP:journals/corr/abs-2106-03843gvp2} and tensor field networks \cite{DBLP:journals/corr/tfn,DBLP:conf/nips/FuchsW0W20}. We notice the close relationship between the tensor field network (TFN) and the equivariance of the continuous functions and also propose our equivariant architecture based on the tensor product.


In this way, we define our task as equivariant neural operator learning. We roughly summarize previous work on operator learning into the following four classes: 1) voxel-based regression (3D-CNN) \cite{doi.org/10.48550/arxiv.1809.02723cnn,DBLP:conf/miccai/RonnebergerFB15,DBLP:conf/miccai/CicekALBR16}; 2) coefficient learning with a pre-defined set of basis functions \cite{DBLP:journals/corr/BrockherdeLBM16,Lee2022-ui,doi.org/10.48550/arxiv.2205.05475,doi.org/10.48550/arxiv.1609.05737}; 3) coordinate-based interpolation neural networks \cite{DBLP:journals/corr/abs-2011-03346dft,Jørgensen2022}; and 4) neural operator learning \cite{DBLP:journals/corr/abs-2003-03485gkn,arxiv.2108.08481NeuralOp,DBLP:conf/iclr/LiKALBSA21,Lu_2021}. The voxel-based 3D-CNN models are straightforward methods for discretizing the continuous input and output but are also sensitive to the specific discretization \cite{arxiv.2108.08481NeuralOp}. The coefficient learning models provide an alternative to discretization and are invariant to grid discretization. However, as the dimension of the Hilbert space is infinite, this method will inevitably incur errors with a finite set of basis functions. The coordinate-based networks take the raw coordinates as input and use a learnable model to ``interpolate'' them to obtain the coordinate-specific output. They leverage the discrete structure and provide a strong baseline, but a hard cut-off distance prevents long-distance interaction. The neural operators (NOs) are the newly-emerged architecture specifically tailored for operator learning with strong theoretical bases \cite{DBLP:journals/corr/abs-2003-03485gkn}. However, current NOs are mostly tested only on 1D or 2D data and have difficulty scaling up to large 3D voxels. They also ignore the discrete structure which provides crucial information in many scenarios.

To leverage the advantages of these methods while mitigating their drawbacks, we build our model upon the coefficient learning framework with an additional equivariant residual operator layer that finetunes the final prediction with the coordinate-specific information. A graphical overview of our model architecture is shown in Fig.\ref{fig:model}.
We also provide a theoretical interpretation of the proposed neural operator learning scheme from the graph spectrum view. Similar to its discrete counterpart of graph convolutional network, our proposed model can be viewed as applying the transformation to the spectrum of the continuous feature function, thus can be interpreted as the spectral convolution on a \emph{graphon}, a dense graph with infinitely many and continuously indexable nodes. In this way, we term our proposed model ``InfGCN''. Our model is able to achieve state-of-the-art performance across several large-scale electron density datasets. Ablation studies were also carried out to further demonstrate the effectiveness of the architecture.

To summarize,  our contributions are, 1) we proposed a novel architecture that combines coefficient learning with the coordinate-based residual operator layer, with our model guaranteed to preserve SE(3)-equivariance by design; 2) we provided a theoretical interpretation of our model from the graph spectrum point of view as graphon convolution; and 3) we carried out extensive experiments and ablation studies on large-scale electron density datasets to demonstrate the effectiveness of our proposed model. Our code is publicly available at \url{https://github.com/ccr-cheng/InfGCN-pytorch}.

\begin{figure*}[ht]
    \centering
    \includegraphics[width=\textwidth]{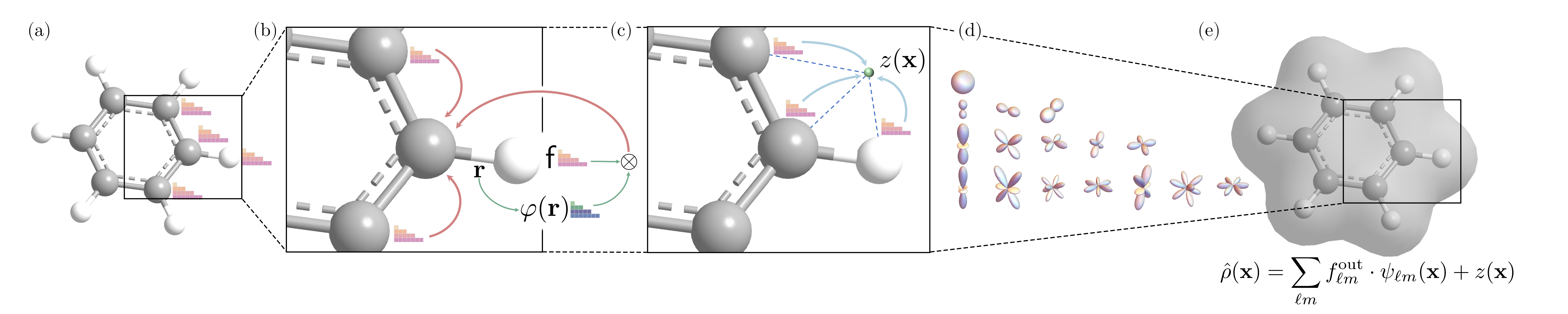}
    \vspace{-2em}
    \caption{Overview of the model architecture. (a) The input molecule with node-wise spherical tensor features. (b) The message passing scheme in InfGCN. $\otimes$ denotes the tensor product of two spherical tensors $\mathsf{f},\varphi(\mathbf{r})$ (Sec.\ref{sec:eqv}). (c) Coordinate-specific residual operator layer (Sec.\ref{sec:res}). (d) Spherical harmonics. (e) The final prediction combining the expanded basis functions and the residue.}
    \label{fig:model}
\end{figure*}

\section{Preliminary}
We use $\mathcal{G}=(\mathcal{V,E})$ to denote the (discrete) graph with the corresponding node coordinates $\{\mathbf{x}_i\}_{i=1}^{|\mathcal{V}|}$. A continuous function over the region $\mathbb{D}$ is also provided as the target feature: $\rho:\mathcal{D}\to \mathbb{R}$.
We also assume that there is an initial feature function $f_\text{in}:\mathcal{D}\to \mathbb{R}$ either obtained from less accurate methods, a random guess, or some learnable initialization.
Formally, given $f_\text{in}\in L^2(\mathcal{D})$, which is a square-integrable input feature function over $\mathcal{D}$, and the target feature function $\rho\in L^2(\mathcal{D})$, we want to learn an operator in the Hilbert space $\mathcal{T}: L^2(\mathcal{D})\to L^2(\mathcal{D})$ to approximate the target function. Different from common regression tasks over finite-dimensional vector spaces, the Hilbert space $L^2(\mathcal{D})$ is infinite-dimensional.

\subsection{Equivariance}
Equivariance describes the behavior of the model when the input data are transformed. Formally, for group $G$ acting on $\mathcal{X}$ and group $H$ acting on $\mathcal{Y}$, for a function $f:\mathcal{X}\to \mathcal{Y}$, if there exists a homomorphism $\mathcal{S}:G\to H$ such that $f(g\cdot \mathbf{x})=(\mathcal{S}g)f(\mathbf{x})$ holds for all $g\in G,\mathbf{x}\in X$, then $f$ is equivariant. Specifically, if $\mathcal{S}:g\mapsto e$ maps every group action to the identity action, we have the definition of \emph{invariance}: $f(g\cdot \mathbf{x})=f(\mathbf{x}),\forall g\in G,\mathbf{x}\in X$.

In this work, we will mainly focus on the 3D Euclidean space with the special Euclidean group SE(3), the group of all rigid transformations. As translation equivariance can be trivially achieved by using only the relative displacement vectors, we usually ignore it in our discussion and focus on rotation (i.e., SO(3)) equivariance. We first define the rotation of a continuous function $f\in L^2(\mathbb{R}^3)$ as $(\mathcal{R}f)(\mathbf{x}):=f(R^{-1}\mathbf{x})$, where $R$ is the rotation matrix associated with the rotation operator $\mathcal{R}$. Note that the inverse occurs because we are rotating the coordinate frame instead of the coordinates. In this way, the equivariance condition of an operator $\mathcal{T}$ with respect to rotation can be formulated as
\begin{equation}
    \mathcal{T}(\mathcal{R}f)=\mathcal{R}(\mathcal{T}f),\forall \mathcal{R} \label{eqn:eqv}
\end{equation}
For clarity, we will distinguish equivariance and invariance, and use equivariance for functions satisfying Eq.\eqref{eqn:eqv}.

\section{Method}

\subsection{Intuition}\label{sec:intui}
Intuitively, we would like to follow the message passing paradigm \cite{DBLP:conf/icml/GilmerSRVD17} to aggregate information from every other point $\mathbf{x}\in\mathcal{D}$. In our scenario, however, as the nodes indexed by the coordinate $\mathbf{x}\in\mathcal{D}$ are infinite and continuous, the aggregation of the messages must be expressed as an integral:
\begin{equation}
    \mathcal{T}_W f(\mathbf{x}):=\int_\mathcal{D} W(\mathbf{x},\mathbf{y})f(\mathbf{y})d\mathbf{y} \label{eqn:fredholm}
\end{equation}
where $W:\mathcal{D}\times\mathcal{D}\to [0,1]$ is a square-integrable kernel function that parameterizes the source node features $f(\mathbf{y})$. There are two major problems regarding the formulation in Eq.\eqref{eqn:fredholm}: 1) unlike the discrete counterpart in which the number of nodes is finite, parameterization of the kernel $W$ in the continuous setting is hard; and 2) even $W$ is well-defined, the integral is generally intractable. Some NOs \cite{DBLP:conf/iclr/LiKALBSA21,Lu_2021} directly approximate the integral with Monte Carlo estimation over all grid points, which makes it harder to scale to voxels. Instead, we follow a similar idea in the coefficient learning methods \cite{DBLP:journals/corr/BrockherdeLBM16,Lee2022-ui} to define a set of complete basis functions $\{\psi_k(\mathbf{x})\}_{k=1}^\infty$ over $L^2(\mathcal{D})$. In this way, the feature function can be \emph{expanded} onto the basis as $f(\mathbf{x})=\sum_{k=1}^\infty f_k \psi_k(\mathbf{x})$ where $f_k$ are the coefficients.
We can then parameterize the message passing in Eq.\eqref{eqn:fredholm} as the coefficient learning with truncation to the $N$-th basis. We call such an expansion method \emph{unicentric} as there is only one basis set for expansion. In theory, as the size of the basis goes to infinite, the above expansion method can approximate any function $y\in L^2(\mathcal{D})$ with a diminishing error. In practice, however, using a very large number of bases is often impractical. The geometric information of the discrete graph is also not leveraged.

\subsection{Multicentric Approximation}
To address the limitation mentioned in the previous subsection, we leverage the discrete graph structure to build a \emph{multicentric} expansion scheme. We use the node coordinates $\mathbf{r}_u$ in the discrete graph as the centers of basis sets: $\hat{\rho}(\mathbf{x})=\sum_{u\in\mathcal{V}}\sum_{i=1}^\infty f_{i,u}\psi_i(\mathbf{x}-\mathbf{r}_u)$.
We demonstrated in Appendix \ref{sec:spectral} that with some regularity and locality assumptions, the message passing in Eq.\eqref{eqn:fredholm} can be parameterized as
\begin{equation}
    f_{i,u}=\sum_{v\in\tilde{\mathcal{N}}(u)}\sum_{j=1}^\infty w_{ij} S_{ij}(\mathbf{r}_{uv})f_{j,v} \label{eqn:gcn}
\end{equation}
where $S_{ij}(\mathbf{r})=\int_\mathcal{D} \psi_i(\mathbf{x})\psi_j(\mathbf{x-r})d\mathbf{x}$ models the interaction between the two displaced basis at centers $i,j$. The outer summation is over $\tilde{\mathcal{N}}(u)=\mathcal{N}(u)\cup \{u\}$, the set of neighboring centers of $u$ including $u$, and $w_{ij}$ are learnable parameters.
Note that, once the basis functions are assigned, $S_{ij}$ only depends on $\mathbf{r}$, but it is generally hard to obtain the closed-form expressions. We can use neural nets to approximate it and coalesce the weight parameter into the nets.

The integral $S_{ij}(\mathbf{r})$ is often referred to as the \emph{overlap integral} in quantum chemistry. The basis functions can be viewed as the atomic orbitals and, in this way, the integral can therefore be interpreted as the overlap between two displaced atom-centered electron clouds. The evaluation of the overlap integral is important in the self-consistent field method (Hartree–Fock method) \cite{PhysRev.32.339HF}.

\subsection{Equivariant Message Passing}\label{sec:eqv}
We will now consider the functions on the 3-dimensional Euclidean space, i.e., $\mathcal{D}=\mathbb{R}^3$, as they are widely encountered in practical applications and non-trivial to achieve equivariance. It is not easy to find a set of \emph{equivariant basis} that satisfies Eq.\eqref{eqn:eqv}. Inspired by the atomic orbitals used in quantum chemistry, we construct the basis function with a Gaussian-based radial function $R_n^\ell(r)$ and a spherical harmonics $Y_\ell^m(\hat{\mathbf{r}})$:
\begin{equation}
    \psi_{n\ell m}(\mathbf{r})=R_n^\ell(r) Y_{\ell}^m(\hat{\mathbf{r}})=
    c_{n\ell}
    \exp(-a_n r^2) r^\ell Y_{\ell}^m(\hat{\mathbf{r}}) \label{eqn:basis}
\end{equation}
where $r=|\mathbf{r}|$ is the vector length and $\hat{\mathbf{r}}=\mathbf{r}/r$ is the direction vector on the unit sphere. $c_{n\ell}$ are normalizing constants such that $\int_{\mathbb{R}^3}|\psi_{n\ell m}(\mathbf{r})|^2 dV=1$. The degree of the spherical harmonics $\ell$ takes values of non-negative integers, and the order $m$ takes integers values between $-\ell$ and $\ell$ (inclusive). Therefore, there are $2\ell+1$ spherical harmonics of degree $\ell$. In this way, the basis index $i,j$ are now triplets of $(n,\ell,m)$.

To further incorporate the directional information, we follow the Tensor Field Network \cite{DBLP:journals/corr/tfn} to achieve equivariance based on the tensor product. Note that for any index pair $(n_1\ell_1m_1,n_2\ell_2m_2)$, the overlap integral $S(\mathbf{r})$ can also be expanded onto the basis as $S(\mathbf{r})=\sum_{n\ell m} s_{n\ell m}\psi_{n\ell m}(\mathbf{r})=:\sum_{n\ell m}\varphi_{n\ell m}(\mathbf{r})$ \footnote{If we use an infinite number of complete orthonormal basis, the expansion can be achieved without error. However, if we use some none-complete or finite basis, this should be viewed as an approximation in the subspace spanned by the basis.}.
For a fixed $\mathbf{r}$ and radial index $n$, the coefficient sequence $\varphi=\{\varphi_{\ell m}:\ell\ge 0,-\ell\le m \le \ell\}$ can be viewed as a \emph{spherical tensor}. Notice that the node feature $\mathsf{f}=\{\mathbf{f}^\ell:\ell\ge 0\}$ can also be viewed as a spherical tensor. In the following discussion, we will omit the radial function index $n$ for clarity as it is independent of rotation. TFN leverages the fact that the spherical harmonics span the basis for the irreducible representations of SO(3) and the tensor product of them produces equivariant spherical tensors. The message passing scheme in TFN is defined as:
\begin{equation}
    \mathbf{f}_u^\ell\gets\sum_{v\in\tilde{\mathcal{N}}(u)}\sum_{k\ge 0}W^{\ell k}(\mathbf{x}_v-\mathbf{x}_u)\mathbf{f}^k_v,\qquad
    W^{\ell k}(\mathbf{r})=\sum_{J=|k-\ell|}^{k+\ell}\varphi^{\ell k}_J(r)\sum_{m=-J}^{J}Y^m_J(\hat{\mathbf{r}})Q_{Jm}^{\ell k}
\label{eqn:tfn}
\end{equation}
where $Q_{Jm}^{\ell k}$ is the Clebsch-Gordan matrix of shape $(2\ell+2)\times(2k+1)$ and $\varphi_J^{\ell k}:\mathbb{R}^+\to\mathbb{R}$ are learnable radial nets that constitute part of the edge tensor features. A detailed deduction is provided in Appendix \ref{sec:proof}.
Intuitively, as TFN can achieve equivariance for spherical tensors, the output spherical tensor interpreted as the coefficients should also give an equivariant continuous function $\hat{\rho}(\mathbf{x})$. Indeed we have
\begin{theorem*}
Given an equivariant continuous input, the message passing defined Eq.\eqref{eqn:tfn} gives an equivariant output when interpreted as coefficients of the basis functions.
\end{theorem*}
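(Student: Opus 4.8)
The plan is to factor the operator $\mathcal{T}$ --- message passing Eq.~\eqref{eqn:tfn} followed by the multicentric reconstruction $\hat{\rho}(\mathbf{x})=\sum_{u}\sum_{n\ell m}f_{n\ell m,u}\,\psi_{n\ell m}(\mathbf{x}-\mathbf{r}_u)$ --- into the coefficient-level map $\mathcal{M}$ (the TFN message passing on spherical tensors, given the node positions) and the reconstruction map $\Phi$ that assembles coefficients and centers into a continuous function, and then to show that each factor intertwines the rotation action appropriately, so that the composite satisfies Eq.~\eqref{eqn:eqv}.

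First I would record how the basis functions transform. Since $R_n^\ell(|R^{-1}\mathbf{r}|)=R_n^\ell(|\mathbf{r}|)$, the radial factor is rotation-invariant, and since the $\{Y_\ell^m\}_m$ span an irreducible representation of $\mathrm{SO}(3)$ we have $Y_\ell^m(R^{-1}\hat{\mathbf{r}})=\sum_{m'}D^\ell_{m'm}(R)\,Y_\ell^{m'}(\hat{\mathbf{r}})$ with $\mathbf{D}^\ell(R)$ the Wigner matrix; hence $(\mathcal{R}\psi_{n\ell m})(\mathbf{r})=\sum_{m'}D^\ell_{m'm}(R)\,\psi_{n\ell m'}(\mathbf{r})$. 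From this I would prove a reconstruction-equivariance lemma: for any coefficients $\mathbf{g}=\{g_{n\ell m,u}\}$ and centers $\{\mathbf{r}_u\}$,
\[
\mathcal{R}\bigl(\Phi(\mathbf{g},\{\mathbf{r}_u\})\bigr)=\Phi\bigl(\{\mathbf{D}^\ell(R)\,\mathbf{g}_{n\ell,u}\},\,\{R\mathbf{r}_u\}\bigr).
\]
The proof is a short computation: write $\psi_{n\ell m}(R^{-1}\mathbf{x}-\mathbf{r}_u)=\psi_{n\ell m}\bigl(R^{-1}(\mathbf{x}-R\mathbf{r}_u)\bigr)$, apply the basis transformation law, and relabel the $m$-index. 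Intuitively, reconstructing from Wigner-rotated coefficients placed at rotated atomic centers equals rotating the reconstructed function.

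Next I would invoke the equivariance of the TFN message passing on the coefficient side, i.e.\ the deduction in Appendix~\ref{sec:proof}: because each $W^{\ell k}$ is assembled from spherical harmonics of the relative displacements contracted with Clebsch--Gordan matrices, and the relative vectors transform as $\mathbf{r}_{uv}\mapsto R\mathbf{r}_{uv}$, one obtains $\mathcal{M}\bigl(\{\mathbf{D}^k(R)\mathbf{f}^k_v\},\{R\mathbf{r}_u\}\bigr)=\{\mathbf{D}^\ell(R)(\mathbf{f}')^{\ell}_u\}$ where $\mathbf{f}'=\mathcal{M}(\mathbf{f},\{\mathbf{r}_u\})$. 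Chaining the two facts finishes the argument: the hypothesis that the input continuous function is equivariant means, via the reconstruction lemma applied to the input expansion (using that the shifted basis functions are linearly independent, so the expansion is unique), that under a rotation the input node tensors transform by the Wigner matrices and the centers by $R$; pushing this rotated input through $\mathcal{M}$ yields Wigner-rotated output coefficients at rotated centers, and $\Phi$ then turns these into $\mathcal{R}\hat{\rho}$, which is exactly $\mathcal{T}(\mathcal{R}f)=\mathcal{R}(\mathcal{T}f)$.

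I expect the main obstacle to be bookkeeping rather than anything deep: keeping all rotation conventions consistent ($R$ versus $R^{-1}$, active versus passive, and the placement of conjugates and transposes in the Wigner matrices) so that the three intertwining relations compose without a stray inverse, together with correctly handling the multicentric structure, since the centers $\mathbf{r}_u$ themselves move under the rotation and the lemma must track the coefficient rotation and the center rotation simultaneously. Two minor points should be stated explicitly: when a finite (non-complete) basis is used, the reconstruction lies in the span of the shifted basis and the argument is exact on that subspace, so the truncation error does not affect equivariance; and translation equivariance is automatic because only relative displacements $\mathbf{r}_{uv}$ enter Eq.~\eqref{eqn:tfn}.
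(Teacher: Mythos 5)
Your proposal is correct and takes essentially the same route as the paper's proof: translate rotation of the continuous function into the Wigner D-matrix action on the spherical-tensor coefficients, use that the Clebsch--Gordan--built TFN kernel intertwines those Wigner matrices (which the paper establishes by rewriting Eq.\eqref{eqn:tfn} as a tensor product and invoking $D_\mathcal{R}(\mathsf{a}\otimes\mathsf{b})=D_\mathcal{R}(\mathsf{a})\otimes D_\mathcal{R}(\mathsf{b})$), and compose the two intertwining relations. The only difference is bookkeeping: you make the multicentric reconstruction with rotated centers and the uniqueness of the expansion explicit, which the paper leaves implicit.
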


A rigorous proof of rotation equivariance can be found in Appendix \ref{sec:proof}. The translation equivariance also trivially holds as we only use the relative displacement vector $\mathbf{r}_{uv}=\mathbf{x}_v-\mathbf{x}_u$. The equivariance of this scheme relies on the equivariance of the input feature map $\mathsf{f}_\text{in}$. Note that the 0-degree features that correspond to pure Gaussians are isotropic, so we can use these features as the initial input. In practice, we use atom-specific embeddings to allow more flexibility in our model.

Also note that for $v=u$, the message passing can be simplified. As the spherical harmonics are orthogonal, the overlap integral is non-zero only if $m_1=m_2$. Therefore, 
\begin{equation}
    \mathbf{f}_u^\ell=w^\ell \mathbf{f}_u^\ell+
    \sum_{v\in\mathcal{N}(u)}\sum_{k\ge 0}W^{\ell k}(\mathbf{x}_v-\mathbf{x}_u)\mathbf{f}^k_v \label{eqn:message}
\end{equation}

The first term is referred to as \emph{self-interaction} in previous papers \cite{DBLP:journals/corr/tfn,DBLP:conf/nips/SchuttKFCTM17}, but can be naturally inferred from our message passing scheme. For the nonlinearity, we follow \cite{DBLP:journals/corr/tfn} to use the vector norm of each degree of vector features:
\begin{equation}
f^0=\sigma_0(f^0),\qquad\mathbf{f}^\ell=\sigma_\ell(\|\mathbf{f}^\ell\|_2) \mathbf{f}^\ell
\end{equation}
where $\sigma_k$ are the activation functions. The equivariance holds as the vector norm is invariant to rotation. Also, to avoid over-parametrization and save computational resources, we only consider the interactions within the same radial index: $ \hat{S}_{n\ell m,n'\ell' m'}(\mathbf{r}):=\delta_{nn'}S_{mm',\ell \ell'}(\mathbf{r})$.
Note that this assumption generally does not hold even for orthogonal radial bases, but in practice, the model was still able to achieve comparable and even better results (Sec.\ref{sec:ablation}). 

\subsection{Residual Operator Layer} \label{sec:res}
The dimension of the function space is infinite, but in practice, we can only use the finite approximation. Therefore, the expressiveness of the model will be limited by the number of basis functions used. Also, as the radial part of the basis in Eq.\eqref{eqn:basis} is neither complete nor orthogonal, it can induce loss for the simple coefficient estimation approach. To mitigate this problem, we apply an additional layer to capture the residue at a given query point $p$ at coordinate $\mathbf{x}$. More specifically, the residual operator layer aggregates the neighboring node features to produce an invariant scalar\footnote{Note that for scalars, SO(3) equivariance is equivalent to invariance.} to finetune the final estimation:
\begin{equation}
    z(\mathbf{x})=\sum_{v\in\mathcal{N}(p)}\sum_{k\ge 0}W^{k}_{\text{res}}(\mathbf{x}_v-\mathbf{x})\mathbf{f}^k_v \label{eqn:res}
\end{equation}
This scheme resembles the coordinate-based interpolation nets and was proved effective in our ablation study (Sec.\ref{sec:ablation}). Therefore, the final output function is
\begin{equation}
    \hat{\rho}(\mathbf{x})=\sum_{n\ell m}f_{n\ell m}\psi_{n\ell m}(\mathbf{x})+z(\mathbf{x}) \label{eqn:model_overall}
\end{equation}
The equivariance of the residual operator layer as well as in the finite approximation case is also provided in Appendix \ref{sec:proof}. The loss function can be naturally defined with respect to the norm in $L^2(\mathbb{R}^3)$ as $\mathcal{L}=\|\hat{\rho}-\rho\|_2^2=\int_{\mathbb{R}^3}|\hat{\rho}(\mathbf{x})-\rho(\mathbf{x})|^2 d\mathbf{x}$.

\section{Graph Spectral View of InfGCN}\label{sec:graphon}
Just as the Graph Convolutional Network (GCN) \cite{DBLP:conf/iclr/KipfW17} can be interpreted as the spectral convolution of the discrete graph, we also provide an interpretation of InfGCN as the transformation on the \emph{graphon} spectra, thus leading to a similar concept of \emph{graphon convolution}.
We will first introduce the (slightly generalized) concept of graphon. Defined on region $\mathcal{D}$, a \emph{graphon}, also known as graph limit or graph function, is a symmetric square-integrable function:
\begin{equation}
W:\mathcal{D}\times\mathcal{D}\to [0,1],\int_{\mathcal{D}^2}|W(\mathbf{x},\mathbf{y})|^2 d\mathbf{x}d\mathbf{y}<\infty \label{eqn:graphon}
\end{equation}
Intuitively, the kernel $W(\mathbf{x},\mathbf{y})$ can be viewed as the probability that an edge forms between the continuously indexable nodes $\mathbf{x},\mathbf{y}\in \mathcal{D}$. Now, consider the operator $\mathcal{T}_W$ defined in Eq.\eqref{eqn:fredholm}. As the integral kernel $W$ is symmetric and square-integrable, we can apply the spectral theorem to conclude that the operator $\mathcal{T}_W$ it induces is a self-adjoint operator whose spectrum consists of a countable number of real-valued eigenvalues $\{\lambda_k\}_{k=1}^\infty$ with $\lambda_k\to 0$.
Let $\{\phi_k\}_{k=1}^\infty$ be the eigenfunctions such that $\mathcal{T}_W\phi_k=\lambda_k \phi_k$. Similarly to the graph convolution for discrete graphs, any transformation on the eigenvalues $\mathcal{F}:\{\lambda_k\}_{k=1}^\infty\mapsto \{\mu_k\}_{k=1}^\infty$ can be viewed as the \emph{graphon convolution} back in the spatial domain.
We note that GCN uses the polynomial approach to filter the graph frequencies as $H\mathbf{x}=\sum_{k=0}^K w_k L^k\mathbf{x}$ where $w_k$ are the parameters. Define the power series of $\mathcal{T}_W$ as:
\begin{equation}
    \mathcal{T}_W^n f(\mathbf{x})=\mathcal{T}_W \mathcal{T}_W^{n-1} f(\mathbf{x})
    =\int_\mathcal{D} W(\mathbf{x},\mathbf{y})\mathcal{T}_W^{n-1} f(\mathbf{y}) d\mathbf{y},\mathcal{T}_W^0=\mathcal{I}\label{eqn:pow}
\end{equation}
where $\mathcal{I}$ is the identity mapping on $\mathcal{D}$. A graphon filter can be then defined as $\mathcal{H}f=\sum_{k=0}^\infty w_k \mathcal{T}_W^k f$. We can also follow GCN to use the Chebyshev polynomials to approximate the graphon filter $\mathcal{H}$:
\begin{equation}
    \mathcal{H}f\approx\theta_1 f+\theta_2\mathcal{T}_W f
\end{equation}
Just as the message passing on discrete graphs can be viewed as graph convolution, we point out here that any model that tries to approximate the continuous analog $\mathcal{T}_W f$ as defined in Eq.\eqref{eqn:fredholm} can also be viewed as \emph{graphon convolution}. This includes InfGCN, all NOs, and coefficient learning nets. A more formal statement using the graph Fourier transform (GFT) and the discrete graph spectral theory are provided in Appendix \ref{sec:spectral} for completeness.

Another related result was demonstrated by Tsubaki et al. \cite{DBLP:conf/nips/TsubakiM20} that the discrete graph convolution is equivalent to the linear transformation on a poor basis function set, with the hidden representation being the coefficient vectors and the weight matrix in GCN being the basis functions. As we have shown above, the same argument can be easily adapted for graphon convolution that the message passing in Eq.\eqref{eqn:message} can be also viewed as the linear combination of atomic orbitals (LCAO) \cite{doi:10.1021/ja01355a027} in traditional quantum chemistry. 

Furthermore, based on Eq.\eqref{eqn:gcn}, we can now give a more intuitive interpretation of the radial network in TFN: it captures the magnitude of the radial part of the overlap integral $S(\mathbf{r})$ of the basis in Eq.\eqref{eqn:basis}. From the point convolution aspect, the TFN structure can be also considered a special case of our proposed InfGCN model. The discrete input features can be regarded as the summation of Dirac measures over the node coordinates as $f_\text{in}(\mathbf{x}) = \sum_{u} f_u \delta(\mathbf{x}-\mathbf{x}_u)$.

\section{Experiments}
We carried out extensive experiments on large-scale electron density datasets to illustrate the state-of-the-art performance of our proposed InfGCN model over the current baselines. Multiple ablation studies were also carried out to demonstrate the effectiveness of the proposed architecture.

\subsection{Datasets and Baselines}
We evaluated our model on three electron density datasets. As computers cannot truly store continuous data, all datasets provide the electron density in a volumetric form on a pre-defined grid. Atom types and atom coordinates are also available as discrete features.

\textbf{QM9}. The QM9 dataset \cite{doi:10.1021/ci300415dQM9,ramakrishnan2014quantum} contains 133,885 species with up to nine heavy atoms (CONF). The density data as well as the data split come from \cite{DBLP:journals/corr/abs-2011-03346dft,Jørgensen2022}, which gives 123835 training samples, 50 validation samples, and 10000 testing samples.
    
\textbf{Cubic}. This large-scale dataset contains electron densities on 17,418 cubic inorganic materials \cite{Wang2022}. In our experiment setting, we first filtered out the noble gas (He, Ne, Ar, Kr, Xe) and kept only the crystal structure whose primitive cell contains less than 64 atoms. This gave 16,421 remaining data points. A data split of 14421, 1000, and 1000 for train/validation/test was pre-assigned.
    
\textbf{MD}. The dataset contains 6 small molecules (ethanol, benzene, phenol, resorcinol, ethane, malonaldehyde) with different geometries sampled from molecular dynamics (MD). The former 4 molecules are from \cite{Bogojeski2020} with 1000 sampled geometries each. The latter two are from \cite{DBLP:journals/corr/BrockherdeLBM16} with 2000 sampled geometries each. The models were trained separately for each molecule.

To evaluate the models, we followed \cite{DBLP:journals/corr/abs-2011-03346dft} to define the \emph{normalized mean absolute error} (NMAE) as our evaluation metrics:
\begin{equation}
    \text{NMAE} = \frac{\int_{\mathbb{R}^3}|\hat{\rho}(\mathbf{x})-\rho(\mathbf{x})|d\mathbf{x}}{\int_{\mathbb{R}^3}|\rho(\mathbf{x})|d\mathbf{x}}
\end{equation}
To avoid randomness, different from the sampling evaluation scheme in \cite{DBLP:journals/corr/abs-2011-03346dft}, we did the evaluation on the partitioned mini-batches of the full density grid. Also, to demonstrate the equivariance of InfGCN, the density and atom coordinates were randomly rotated during inference for the QM9 dataset. The rotated density was sampled with trilinear interpolation from the original grid. Equivariance is trivial for crystal data, as there is a canonical way of assigning the primitive cell. Similarly, for the MD dataset, the authors described canonical ways to align the molecules \cite{Bogojeski2020, DBLP:journals/corr/BrockherdeLBM16}, so we also did not rotate them. More details regarding the datasets can be found in Appendix \ref{sec:data}.

We compared our proposed model with a wide range of different baselines including \textbf{CNN} \cite{DBLP:conf/miccai/RonnebergerFB15,DBLP:conf/miccai/CicekALBR16}, interpolation networks (\textbf{DeepDFT} \cite{DBLP:journals/corr/abs-2011-03346dft}, \textbf{DeepDFT2} \cite{Jørgensen2022}, \textbf{EGNN} \cite{DBLP:conf/icml/SatorrasHW21}, \textbf{DimeNet} \cite{DBLP:conf/iclr/KlicperaGG20}, \textbf{DimeNet++} \cite{gasteiger_dimenetpp_2020}), and neural operators (\textbf{GNO} \cite{DBLP:journals/corr/abs-2003-03485gkn}, \textbf{FNO} \cite{DBLP:conf/iclr/LiKALBSA21}, \textbf{LNO} \cite{Lu_2021}). For InfGCN, we set the maximal degree of spherical tensors to $L=7$, with 16 radial basis and 3 convolution layers. For CNN and neural operators, an atom type-specific initial density function is constructed. A sampling scheme is used for all models except for CNN.
All models were trained on a single NVIDIA A100 GPU. More specifications on the model architecture and the training procedure can be found in Appendix \ref{sec:spec}.

\subsection{Main Results}
The NMAE of different models on various datasets are shown in Table \ref{tab:res}. Models with the best performance are highlighted in bold. Our model is able to achieve state-of-the-art performance for almost all datasets. For QM9 and Cubic, the performance improvement is about 1\% compared to the second-best model on both rotated and unrotated data, which is significant considering the small loss. We also noticed that CNN worked well on small molecules in MD and QM9, but quickly ran out of memory for larger Cubic data samples with even a batch size of 1 (marked ``OOM'' in the table). This is because CNN ran on the full grid with a maximal number of $384^3$ voxels.

\begin{table}[ht]
\centering
\caption{NMAE (\%) on QM9, Cubic, and MD datasets.} \label{tab:res}
\vspace{-.6em}

\resizebox{\linewidth}{!}{
\begin{tabular}{@{}llcccccccccc@{}}
\toprule
\multicolumn{2}{l}{\multirow{2}{*}{Dataset/Model}} & \multirow{2}{*}{\textbf{InfGCN}} & \multirow{2}{*}{\textbf{CNN}} & \multicolumn{5}{c}{Interpolation Net} & \multicolumn{3}{c}{Neural Operator} \\ \cmidrule(l){5-9} \cmidrule(l){10-12} 
\multicolumn{2}{l}{} &  &  & \textbf{DeepDFT} & \textbf{DeepDFT2} & \textbf{EGNN} & \textbf{DimeNet} & \textbf{DimeNet++} & \textbf{GNO} & \textbf{FNO} & \textbf{LNO} \\ \midrule
\multicolumn{1}{l|}{\multirow{2}{*}{QM9}} & rotated & \textbf{4.73} & 5.89 & 5.87 & 4.98 & 12.13 & 12.98 & 12.75 & 46.90 & 33.25 & 24.13 \\
\multicolumn{1}{l|}{} & unrotated & \textbf{0.93} & 2.01 & 2.95 & 1.03 & 11.92 & 11.97 & 11.69 & 40.86 & 28.83 & 26.14 \\ \midrule
\multicolumn{2}{l}{Cubic} & \textbf{8.98} & OOM & 14.08 & 10.37 & 11.74 & 12.51 & 12.18 & 53.55 & 48.08 & 46.33 \\ \midrule
\multicolumn{1}{l|}{\multirow{6}{*}{MD}} & ethanol & 8.43 & 13.97 & \textbf{7.34} & 8.83 & 13.90 & 13.99 & 14.24 & 82.35 & 31.98 & 43.17 \\
\multicolumn{1}{l|}{} & benzene & \textbf{5.11} & 11.98 & 6.61 & 5.49 & 13.49 & 14.48 & 14.34 & 82.46 & 20.05 & 38.82 \\
\multicolumn{1}{l|}{} & phenol & \textbf{5.51} & 11.52 & 9.09 & 7.00 & 13.59 & 12.93 & 12.99 & 66.69 & 42.98 & 60.70 \\
\multicolumn{1}{l|}{} & resorcinol & \textbf{5.95} & 11.07 & 8.18 & 6.95 & 12.61 & 12.04 & 12.01 & 58.75 & 26.06 & 35.07 \\
\multicolumn{1}{l|}{} & ethane & 7.01 & 14.72 & 8.31 & \textbf{6.36} & 15.17 & 13.11 & 12.95 & 71.12 & 26.31 & 77.14 \\
\multicolumn{1}{l|}{} & \makebox[40px]{\resize{40px}{malonaldehyde}} & \textbf{10.34} & 18.52 & 9.31 & 10.68 & 12.37 & 18.71 & 16.79 & 84.52 & 34.58 & 47.22 \\ \bottomrule
\end{tabular}
}
\end{table}
\vspace{-1em}

\begin{figure}[ht]
    \centering
    \includegraphics[width=\linewidth]{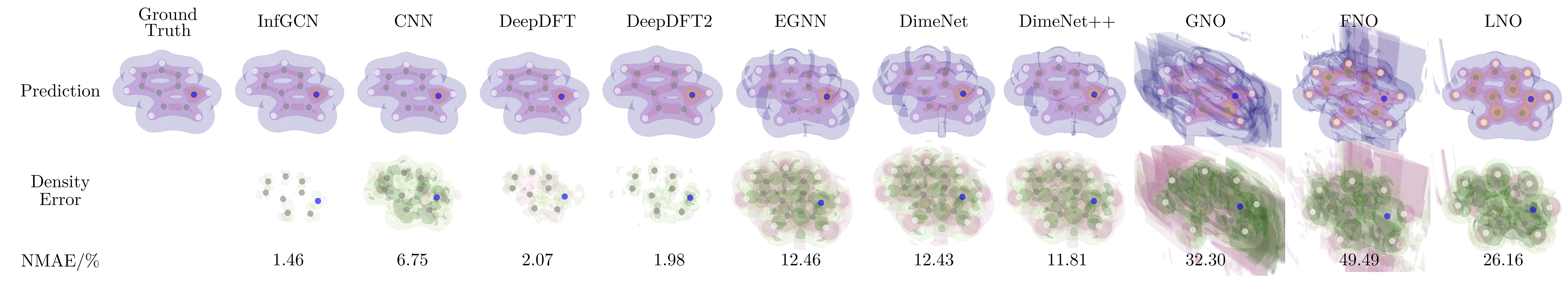}
    \includegraphics[width=\linewidth]{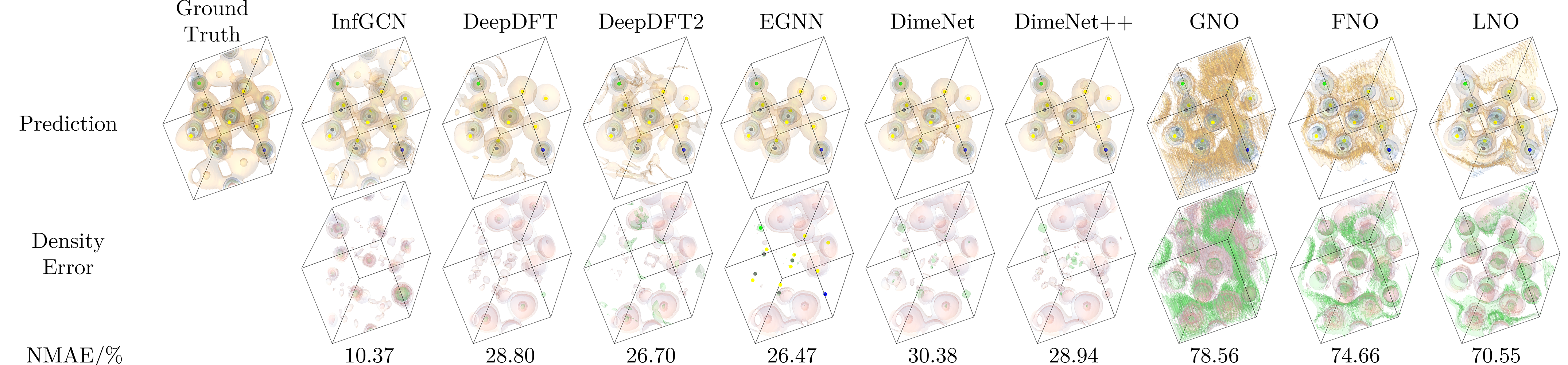}
    \vspace{-.8em}
    \caption{Visualization of the predicted density, the density error, and the NMAE. \textbf{Up}: Indole (file ID 24492 from QM9). \textbf{Down}: \ce{Cr_4CuNiSe_8} (mp-1226405 from Cubic). The colors of the points indicate different atom types, and the isosurfaces indicate different density values. The pink and green isosurfaces in the error plots represent the negative and positive errors, respectively. }
    \label{fig:res_plot}
    \vspace{-0.5em}
\end{figure}

The visualizations of the predicted densities in Fig.\ref{fig:res_plot} can provide more insights into the models. On QM9, the error of InfGCN had a regular spherical shape, indicating the smoothness property of the coefficient-based methods. The errors for the interpolation nets and CNN had a more complicated rugged spatial pattern. For Cubic, InfGCN was able to capture the periodicity information whereas almost all other models failed. The neural operators showed a distinct spatial pattern on the partition boundaries of the grid, as it was demonstrated to be sensitive to the partition. More visualizations are provided in Appendix \ref{sec:add_res} on Cubic and QM9 with a wide range of representative molecules to demonstrate the generalizability of InfGCN.

The plots of model sizes versus NMAE on the QM9 dataset are shown in Figure \ref{fig:ablation}. It can be clearly seen from the figure that InfGCN achieved better performance with relatively small model size. The interpolation nets and CNN (in red) provide strong baselines. The neural operators (in orange), on the other hand, fail to scale to 3D data as a sampling scheme is required.

\subsection{Ablation Study}\label{sec:ablation}

\begin{figure}[ht]
    \centering
    \includegraphics[width=0.4\linewidth]{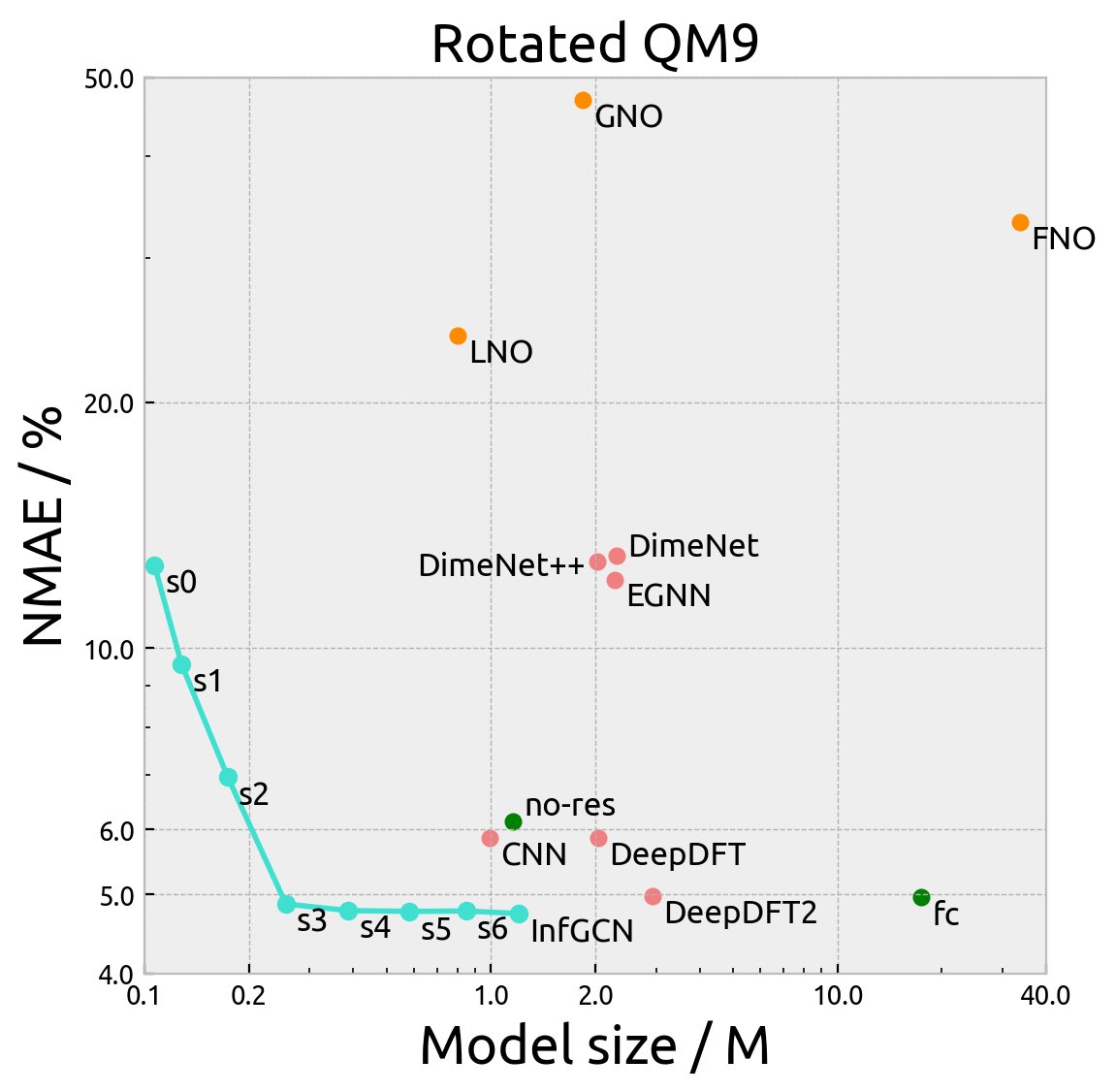}
    \includegraphics[width=0.4\linewidth]{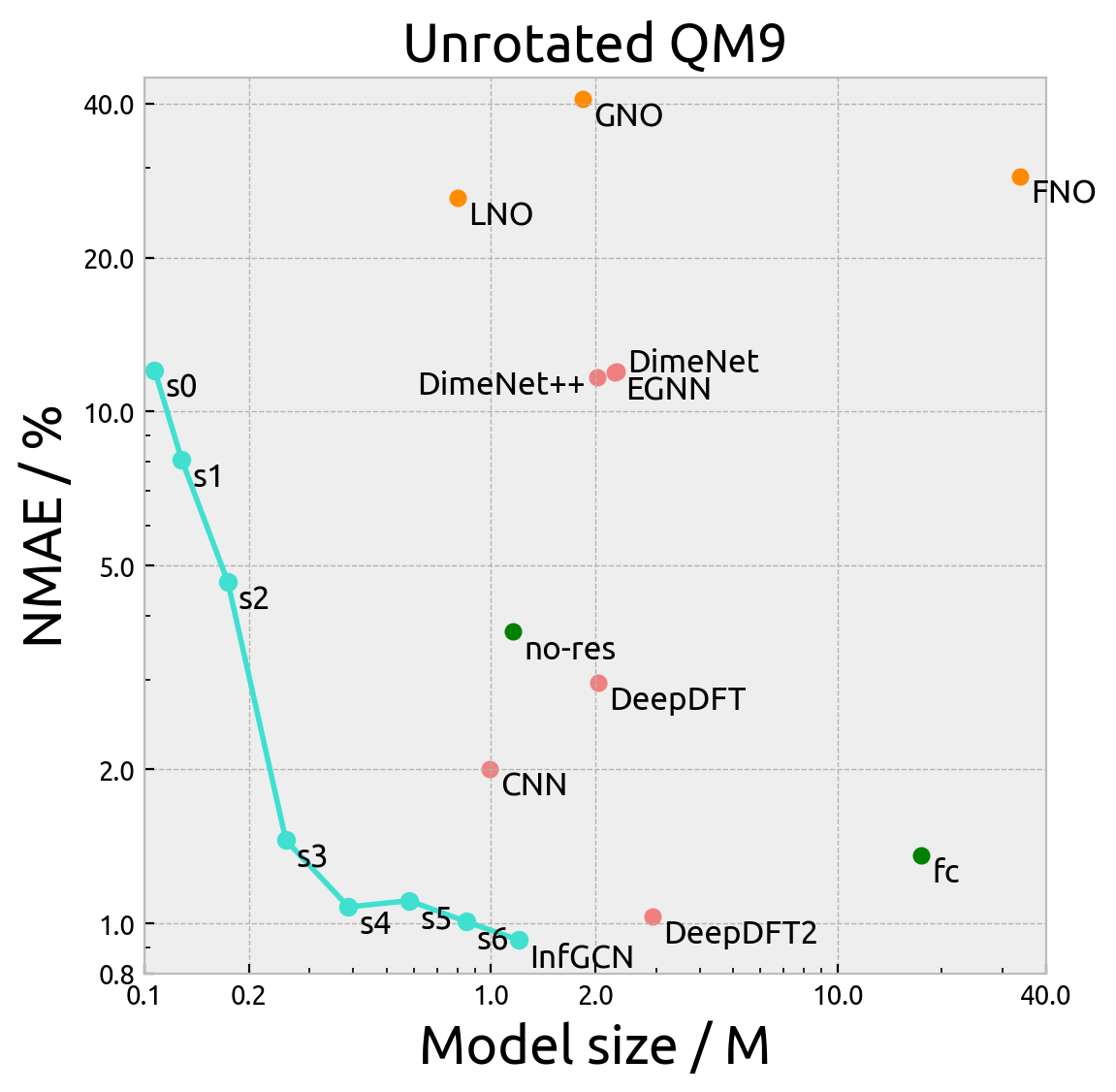}
    \caption{Plot of model sizes vs NMAE for different models and ablation studies on the QM9 dataset. }
    \label{fig:ablation}
    \vspace{-.5em}
\end{figure}

To further demonstrate the effectiveness of InfGCN, we also carried out extensive ablation studies on various aspects of the proposed architecture on the QM9 dataset. The results are summarized in Table \ref{tab:ablation} and are also demonstrated in Figure \ref{fig:ablation} in blue and green.

\begin{table*}[ht]
\centering
\caption{NMAE (\%) and the parameter count of different model settings on the QM9 dataset.} \label{tab:ablation}
\resizebox{\linewidth}{!}{
\begin{tabular}{@{}c|cccccccc|cc@{}}
\toprule
Model & InfGCN($s_7$) & $s_6$ & $s_5$ & $s_4$ & $s_3$ & $s_2$ & $s_1$ & $s_0$ & no-res & fc \\ \midrule
QM-rot (\%) & \textbf{4.73} & 4.77 & 4.76 & 4.77 & 4.86 & 6.95 & 9.56 & 12.62 & 6.14 & 4.95 \\
QM-unrot (\%) & \textbf{0.93} & 1.01 & 1.11 & 1.08 & 1.46 & 4.65 & 8.07 & 12.05 & 3.72 & 1.36 \\ \midrule
Parameters (M) & 1.20 & 0.85 & 0.58 & 0.39 & 0.26 & 0.17 & 0.13 & 0.11 & 1.16 & 17.42 \\
\bottomrule
\end{tabular}
}
\end{table*}

\textbf{Number of spherical basis.} For coefficient learning models, using more basis functions will naturally lead to a more expressive power of the model. For discrete tasks, \cite{DBLP:journals/corr/tfn,DBLP:conf/nips/FuchsW0W20} used only the degree-1 spherical tensor which corresponds to vectors. We ran experiments with the maximal degree of the spherical tensor $0\le L\le 7$ ($s_L$ columns). Note that $s_0$ corresponds to atom-centered Gaussian mixtures. It can be shown in Figure \ref{fig:ablation} (in blue) that the error smoothly drops as the maximal degree increases. Nonetheless, the performance gain is not significant with $L\ge 4$. This is probably because the residual operator layer can effectively finetune the finite approximation error and therefore allows for the trade-off between performance and efficiency. In this way, our proposed InfGCN can potentially scale up to larger datasets with an appropriate choice of the number of spherical basis.

\textbf{Residue prediction.}
The residue prediction layer is one of the major contributions of our model that tries to mitigate the finite approximation error. It can be shown (under \texttt{no-res}) that this design significantly improves the performance by nearly 2\% with negligible increases in the model size and training time. These results justify the effectiveness of the residue prediction scheme.

\textbf{Fully-connected tensor product.}
As mentioned in Sec.\ref{sec:eqv}, we used a channel-wise tensor product instead a fully connected one that allows inter-channel interaction. We also tried the fully-connect tensor product under \texttt{fc}. It turns out that the fully-connected model was 15 times larger than the original model and took 2.5 times as long as the latter to train. The results, however, are even worse, probably due to overfitting on the training set.

\section{Related Work}
\subsection{Neural Operator Learning}
We use the term \emph{neural operator} in a wider sense for any model that outputs continuous data here. For modeling 3D densities, statistical approaches are still widely used in quantum chemistry realms. For example, \cite{DBLP:journals/corr/BrockherdeLBM16} and \cite{doi.org/10.48550/arxiv.2205.05475} used kernel ridge regression to determine the coefficients for atomic orbitals. \cite{PhysRevLett.120.036002} used a symmetry-adapted Gaussian process regression for coefficient estimation. These traditional methods are able to produce moderate to good results but are also less flexible and difficult to scale. For machine learning-based methods, \cite{doi.org/10.48550/arxiv.1809.02723cnn} utilized a voxel-based 3D convolutional net with a U-Net architecture \cite{DBLP:conf/miccai/RonnebergerFB15} to predict density at a voxel level. Other works leveraged a similar idea of multicentric approximation. \cite{Lee2022-ui} and \cite{doi.org/10.1002/advs.202004214} all designed a tensor product-based equivariant GNN to predict the density spectra. These works are more flexible and efficient, but coefficient learning models inevitably have finite approximation errors.

Another stream of work on neural operator learning focused on directly transforming the discretized input. Tasks of these models often involve solving PDE or ODE systems in 1D or 2D scenarios. For example, \cite{DBLP:conf/icml/LivniCG17} proposed the infinite-layer network to approximate the continuous output. Graph Neural Operator \cite{DBLP:journals/corr/abs-2003-03485gkn} approximated the operator with randomly sampled subgraphs and the message passing scheme. \cite{DBLP:conf/iclr/LiKALBSA21} and \cite{DBLP:journals/corr/abs-2205-10573} tried to parameterize and learn the operator from the Fourier domain and spectral domain, respectively. \cite{DBLP:journals/corr/abs-2204-11127} proposed an analog to the U-Net structure to achieve memory efficiency. These models are hard to scale to larger 3D data and are also sensitive to the partition of the grid if a sampling scheme is required. They also do not leverage the discrete structure.

\subsection{Interpolation Networks}
The term \emph{interpolation network} was coined in \cite{arxiv.2108.08481NeuralOp} for models that take raw query coordinates as input. As graph neural networks have achieved tremendous success in discrete tasks, they are usually the base models for interpolation nets. \cite{DBLP:journals/jcphy/Zepeda-NunezCZJ21} and \cite{doi.org/10.48550/arxiv.2205.05475} constructed the molecule graph to perform variant message passing schemes with the final query-specific prediction. \cite{DBLP:journals/corr/abs-2011-03346dft} proposed the DeepDFT model which also considered the graph between query coordinates and
\cite{Jørgensen2022} further extended it to use a locally equivariant GNN \cite{DBLP:conf/icml/SchuttUG21}. \cite{DBLP:journals/corr/abs-2201-08348} proposed a similar model on crystalline compounds.
Besides these specialized models, we also point out that current equivariant models for discrete graphs can all be adapted for continuous tasks in principle, just like DimeNet and DimeNet++ that we used as the baselines. Models that use only the invariant features including distance, angles, and dihedral angles can be trivially equivariant but lacking expressiveness \cite{DBLP:conf/cvpr/DengBI18,DBLP:conf/nips/SchuttKFCTM17,DBLP:conf/eccv/CoorsCG18,DBLP:conf/iclr/KlicperaGG20}. \cite{DBLP:conf/iclr/JingESTD21,DBLP:journals/corr/abs-2106-03843gvp2} proposed the GVP model in which features are partitioned into scalars and vectors with carefully designed interaction to guarantee equivariance. Other works leveraged the canonical local frame \cite{Jumper2021} or tried to learn such a local frame \cite{DBLP:conf/cvpr/LuoLGSC0M22}. Another line of works, the tensor field network \cite{DBLP:journals/corr/tfn,DBLP:conf/nips/FuchsW0W20}, utilized the group theoretical results of the irreducible representations of SO(3) and proposed a tensor product based architecture. We follow the last method as we notice the close relationship between the spherical tensor and the basis set. Though previous works with similar architecture exist \cite{Lee2022-ui,doi.org/10.1002/advs.202004214}, we first give rigorous proof of the equivariance of the continuous function.

\subsection{Downstream Applications}
Several previous works tried to leverage the geometric information of the continuous function. \cite{doi:10.1126/science.abj6511} utilized the charge density and spin density as both the supervising signal and the additional input for predicting molecular energies, which achieved a significant performance improvement compared to traditional DFT-based methods. \cite{DBLP:conf/nips/TsubakiM20,Bogojeski2020} first projected the density onto the pre-defined basis set and then applied different neural nets on the coefficients to make predictions on downstream tasks. \cite{Si2020} used a 3D CNN to encode the electron density of the protein complex to predict the backbone structure. These works have demonstrated the significance of continuous data.

\section{Limitation and Conclusion}
In this paper, we introduce a novel equivariant neural operator learning architecture with the core component interpretable as the convolution on graphons. With extensive experiments, we demonstrated the effectiveness and generalizability of our model. We also discuss the limitation and potential improvement of the proposed InfGCN model in future work. As the choice of the radial basis is arbitrary, there is no theory or criterion for a better radial basis, and therefore, it leaves space for improvement. For example, we may use Slater-type orbitals (STO) instead of Gaussian-type orbitals. We may further orthogonalize the basis, which leads to the series of solutions to the Schrödinger equation for the hydrogen-like atom with more direct chemical indications. For structures with periodic boundary conditions, Fourier bases may provide a better solution. A learnable radial basis parameterized by a neural net is also a feasible option to provide more flexibility.

\bibliographystyle{abbrvnat}
\bibliography{egbib}

\clearpage
\newpage
\appendix
\centerline{\Large\bf Supplementary Material}

\section{Proof of Rotation Equivariance}\label{sec:proof}
In this section, we will give a rigid proof of rotation equivariance of our proposed InfGCN model with finite approximation. Just as mentioned in the main text, we will ignore the radial index $n$ for clarity. Recall that we want to generate equivariant density functions as shown in Figure \ref{fig:rot_example}.

\begin{figure}[ht]
    \centering
    \includegraphics[width=0.5\linewidth]{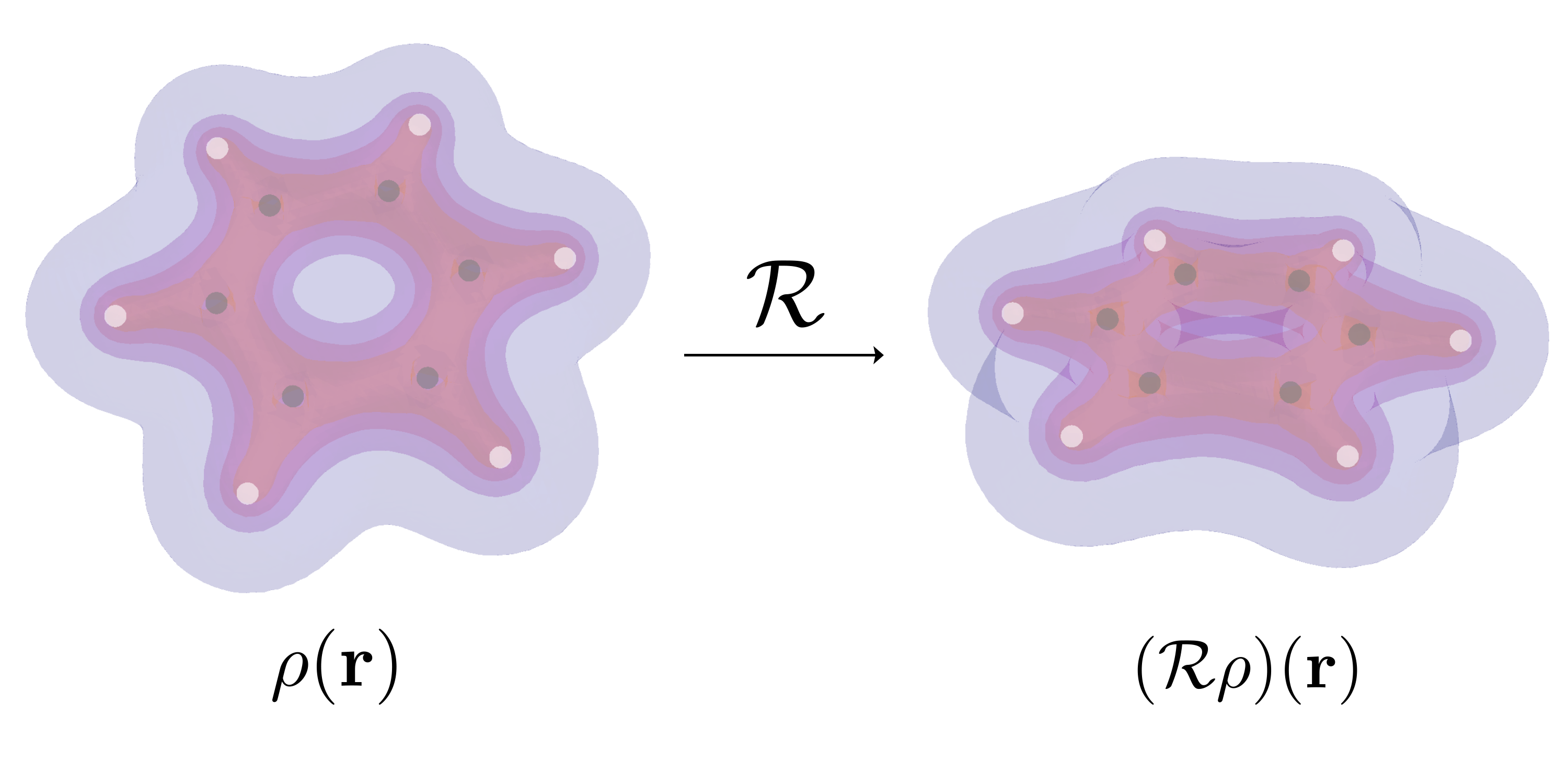}
    \caption{An illustration of rotation equivariance of the electron density function of benzene on $\mathbb{R}^3$.}
    \label{fig:rot_example}
\end{figure}

\begin{propos}
Rotation of a spherical harmonic of degree $\ell$ and order $m$ $(\mathcal{R}Y_\ell^m)(\mathbf{r}):=Y_\ell^m(R^{-1}\mathbf{r})$ transforms into a linear combination of spherical harmonics of the same degree:
\begin{equation}
    (\mathcal{R}Y_\ell^m)(\mathbf{r})=\sum_{m'=-\ell}^\ell D_{mm'}^\ell(\mathcal{R}) Y_\ell^{m'}(\mathbf{r}) \label{eqn:rot_sh}
\end{equation}
where $D_{mm'}^\ell(\mathcal{R})$ is an element of the Wigner D-matrix.
\end{propos}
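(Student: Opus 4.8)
The plan is to use the classical fact that the degree-$\ell$ spherical harmonics span a rotation-invariant subspace of $L^2(S^2)$, so that any rotation of $Y_\ell^m$ must re-expand onto $\{Y_\ell^{m'}\}_{m'=-\ell}^{\ell}$, and then to read off the expansion coefficients as the entries of a matrix that one checks is a group representation. First I would recall the characterization of spherical harmonics: $Y_\ell^m$ is (the restriction to the unit sphere of) a homogeneous harmonic polynomial of degree $\ell$ on $\mathbb{R}^3$, equivalently an eigenfunction of the spherical Laplacian $\Delta_{S^2}$ with eigenvalue $-\ell(\ell+1)$, and $\{Y_\ell^{m}\}_{m=-\ell}^{\ell}$ is an orthonormal basis of the $(2\ell+1)$-dimensional eigenspace $\mathcal{H}_\ell$.

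Next I would show $\mathcal{H}_\ell$ is invariant under $\mathcal{R}$. Writing $p$ for the homogeneous harmonic polynomial with $p|_{S^2}=Y_\ell^m$, the function $p\circ R^{-1}$ is again a polynomial, still homogeneous of degree $\ell$ because $R^{-1}$ is linear, and still harmonic because the Laplacian commutes with orthogonal changes of variable, $\Delta(p\circ R^{-1})=(\Delta p)\circ R^{-1}=0$. Restricting to $S^2$, which $R^{-1}$ preserves, gives $\mathcal{R}Y_\ell^m = (p\circ R^{-1})|_{S^2}\in\mathcal{H}_\ell$. Since $\{Y_\ell^{m'}\}_{m'}$ is a basis of $\mathcal{H}_\ell$, there are unique scalars, which we name $D_{mm'}^\ell(\mathcal{R})$, with $\mathcal{R}Y_\ell^m=\sum_{m'=-\ell}^{\ell}D_{mm'}^\ell(\mathcal{R})Y_\ell^{m'}$; by orthonormality $D_{mm'}^\ell(\mathcal{R})=\langle \mathcal{R}Y_\ell^m,Y_\ell^{m'}\rangle_{L^2(S^2)}$ up to the conjugation convention.

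Finally I would verify that $\mathcal{R}\mapsto D^\ell(\mathcal{R})$ is a homomorphism into $GL(2\ell+1)$: applying $(\mathcal{R}_1\mathcal{R}_2)Y_\ell^m=\mathcal{R}_1(\mathcal{R}_2 Y_\ell^m)$ and comparing coefficients yields the matrix product rule, which together with $D^\ell(\mathrm{id})=I$ identifies these matrices with the Wigner D-matrices, the standard $(2\ell+1)$-dimensional representation of SO(3). The only delicate points are bookkeeping ones: the inverse in the definition $(\mathcal{R}Y_\ell^m)(\mathbf{r})=Y_\ell^m(R^{-1}\mathbf{r})$, whether $D^\ell$ acts on row or column vectors, and the complex-conjugation convention in the definition of the Wigner matrix — these only change whether one obtains $D^\ell(\mathcal{R}_1\mathcal{R}_2)=D^\ell(\mathcal{R}_1)D^\ell(\mathcal{R}_2)$ or its transpose/conjugate, not the substance. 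If one further wants to pin down that this is exactly \emph{the} Wigner matrix rather than merely some $(2\ell+1)$-dimensional representation, the extra ingredient is irreducibility of $\mathcal{H}_\ell$, which is itself classical (via a Casimir or ladder-operator argument) but is not needed for the statement as phrased; thus the main obstacle is conceptually mild, and the real work is the invariance argument of the second paragraph.
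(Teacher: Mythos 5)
Your argument is correct, and it is genuinely different from what the paper does: the paper does not prove this proposition at all, it simply cites Edmonds (Eq.~4.1.4 of \emph{Angular Momentum in Quantum Mechanics}), where the result is obtained within the angular-momentum formalism --- the Wigner $D$-matrix is defined there as the matrix of the rotation operator $e^{-i\alpha J_z}e^{-i\beta J_y}e^{-i\gamma J_z}$ in the $|\ell m\rangle$ basis, so the expansion \eqref{eqn:rot_sh} is essentially immediate from that definition together with the identification of $Y_\ell^m$ with $\langle \hat{\mathbf{r}}|\ell m\rangle$. Your route instead works from the harmonic-polynomial characterization: $\mathcal{H}_\ell$ is the restriction to $S^2$ of degree-$\ell$ homogeneous harmonic polynomials, rotations preserve homogeneity and commute with the Laplacian, hence $\mathcal{R}Y_\ell^m\in\mathcal{H}_\ell$ and the coefficients $D^\ell_{mm'}(\mathcal{R})=\langle \mathcal{R}Y_\ell^m, Y_\ell^{m'}\rangle$ (up to conjugation convention) form a $(2\ell+1)$-dimensional representation of SO(3). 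This is self-contained and requires no operator algebra, which is a real advantage over a bare citation; its only cost, which you correctly flag, is that it identifies $D^\ell$ as \emph{the} Wigner matrix only up to basis and conjugation conventions --- pinning down the exact entries (and hence matching the conventions used downstream in the TFN-style message passing, Eq.~\eqref{eqn:tfn}) still requires either the explicit Euler-angle construction as in Edmonds or irreducibility of $\mathcal{H}_\ell$, neither of which is needed for the statement as phrased. For the purposes of the equivariance proofs in Appendix~\ref{sec:proof}, which only use that the degree-$\ell$ coefficients transform linearly among themselves by a fixed matrix $D^\ell_{\mathcal{R}}$ forming a representation, your proof fully suffices.
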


The proof of this property of spherical harmonics can be found in books on quantum mechanics, for example, Eq.4.1.4 in \cite{edmonds1957angular}. Therefore, for a square-integrable function defined on the unit sphere in $\mathbb{R}^3$, we can also describe the rotation of the function with Wigner D-matrics:
\begin{propos}\label{propos:rot}
Assume $f\in L^2(S^2)$ and the rotation of $f$ have the (infinite) expansions onto the spherical harmonic basis as:
\begin{equation}
\begin{aligned}
    f(\mathbf{r})&=\sum_{\ell m} f_m^\ell Y_\ell^{m}(\mathbf{r})\\
    \mathcal{R}f(\mathbf{r})&=\sum_{\ell m} g_m^\ell Y_\ell^{m}(\mathbf{r})
\end{aligned}
\end{equation}
Then, we have
\begin{equation}
    \mathbf{g}^\ell=D^\ell_\mathcal{R} \mathbf{f}^\ell
\end{equation}
where $\mathbf{f}^\ell$ is the coefficient vector of degree $\ell$ with $m=2\ell+1$ elements, and $D^\ell_\mathcal{R}$ is the corresponding Wigner D-matrix of degree $\ell$.
\end{propos}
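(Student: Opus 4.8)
The plan is to reduce the statement to the degree-preserving transformation rule for individual spherical harmonics, Eq.\eqref{eqn:rot_sh}, and then match coefficients using the fact that $\{Y_\ell^m\}$ is an orthonormal basis of $L^2(S^2)$. First I would record that the rotation operator $\mathcal{R}$ is unitary on $L^2(S^2)$: it is an isometry because it acts by the measure-preserving change of variables $\mathbf{r}\mapsto R^{-1}\mathbf{r}$ on the sphere, with inverse $\mathcal{R}^{-1}$ associated to $R^{-1}$. In particular $\mathcal{R}$ is bounded, hence continuous, and therefore commutes with $L^2$-convergent series. Applying it term by term to $f=\sum_{\ell m} f_m^\ell Y_\ell^m$ gives $\mathcal{R}f=\sum_{\ell m} f_m^\ell(\mathcal{R}Y_\ell^m)$ with convergence in $L^2(S^2)$; note also that $f_m^\ell=\langle Y_\ell^m, f\rangle$ is well defined for all $\ell,m$ since $f\in L^2(S^2)$, so both expansions in the hypothesis are legitimate.

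Next I would substitute Eq.\eqref{eqn:rot_sh}, namely $(\mathcal{R}Y_\ell^m)=\sum_{m'} D_{mm'}^\ell(\mathcal{R})Y_\ell^{m'}$, and reorganize the double sum over $m$ (a finite sum within each degree-$\ell$ block, so no rearrangement issue there) to collect the coefficient of each $Y_\ell^{m'}$:
\begin{equation*}
\mathcal{R}f=\sum_\ell\sum_{m'}\Bigl(\sum_m D_{mm'}^\ell(\mathcal{R})f_m^\ell\Bigr)Y_\ell^{m'}.
\end{equation*}
Comparing with the assumed expansion $\mathcal{R}f=\sum_{\ell m'} g_{m'}^\ell Y_\ell^{m'}$ and invoking uniqueness of coefficients with respect to the orthonormal basis $\{Y_\ell^m\}$ yields $g_{m'}^\ell=\sum_m D_{mm'}^\ell(\mathcal{R})f_m^\ell$, which is exactly the claimed finite-dimensional relation $\mathbf{g}^\ell=D_\mathcal{R}^\ell\mathbf{f}^\ell$ degree by degree, once the index convention is fixed so that $(D^\ell_\mathcal{R})_{m'm}=D^\ell_{mm'}(\mathcal{R})$.

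The main obstacle is not any deep fact but careful bookkeeping of two points. First, justifying the interchange of $\mathcal{R}$ with the infinite sum — handled by boundedness/unitarity of $\mathcal{R}$ on $L^2(S^2)$, after which everything reduces to the finite-dimensional block for each $\ell$. Second, and more delicate, keeping the Wigner D-matrix index convention consistent between Eq.\eqref{eqn:rot_sh} (which transforms the \emph{harmonics}) and the matrix–vector statement $\mathbf{g}^\ell=D^\ell_\mathcal{R}\mathbf{f}^\ell$ (which transforms the \emph{coefficients}): depending on the phrasing one may pick up a transpose, equivalently a swap of $D^\ell(\mathcal{R})$ with $D^\ell(\mathcal{R}^{-1})=D^\ell(\mathcal{R})^\dagger$. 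I would fix this once at the outset by adopting the convention that makes Eq.\eqref{eqn:rot_sh} hold as written, so that the coefficient rule then follows with the matrix $D^\ell_\mathcal{R}$ exactly as stated, and the representation property $D^\ell(\mathcal{R}_1\mathcal{R}_2)=D^\ell(\mathcal{R}_1)D^\ell(\mathcal{R}_2)$ is respected.
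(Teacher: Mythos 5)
Your proposal is correct and follows essentially the same route as the paper's (very terse) proof: apply Eq.\eqref{eqn:rot_sh} term by term and observe that within each degree-$\ell$ block the coefficients transform linearly, so they are related by the Wigner D-matrix. Your additional care about the unitarity of $\mathcal{R}$ justifying the termwise application to the $L^2$ series and about the transpose/index convention of $D^\ell$ fills in details the paper leaves implicit, but does not change the argument.
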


\begin{proof}
Notice that for each degree $\ell$, the coefficients are transformed linearly according to Eq.\eqref{eqn:rot_sh}, which concludes the proof.
\end{proof}

Define spherical tensor $\mathsf{f}=\{\mathbf{f}^\ell:\ell\ge 0\}$, we can further simplify the notation in Proposition \ref{propos:rot} as
\begin{equation}
    \mathcal{R}f=D_\mathcal{R}(\mathsf{f}) \label{eqn:rot_wigner}
\end{equation}

A pictorial illustration of the rotation of the spherical harmonics is provided in Figure \ref{fig:eqv_sh}. It can be shown that the computational diagram commutes in a sense it is equivalent to applying Wigner D-matrices on the coefficients and then projecting them back as a continuous function.

\begin{figure}[ht]
    \centering
    \includegraphics[width=0.8\linewidth]{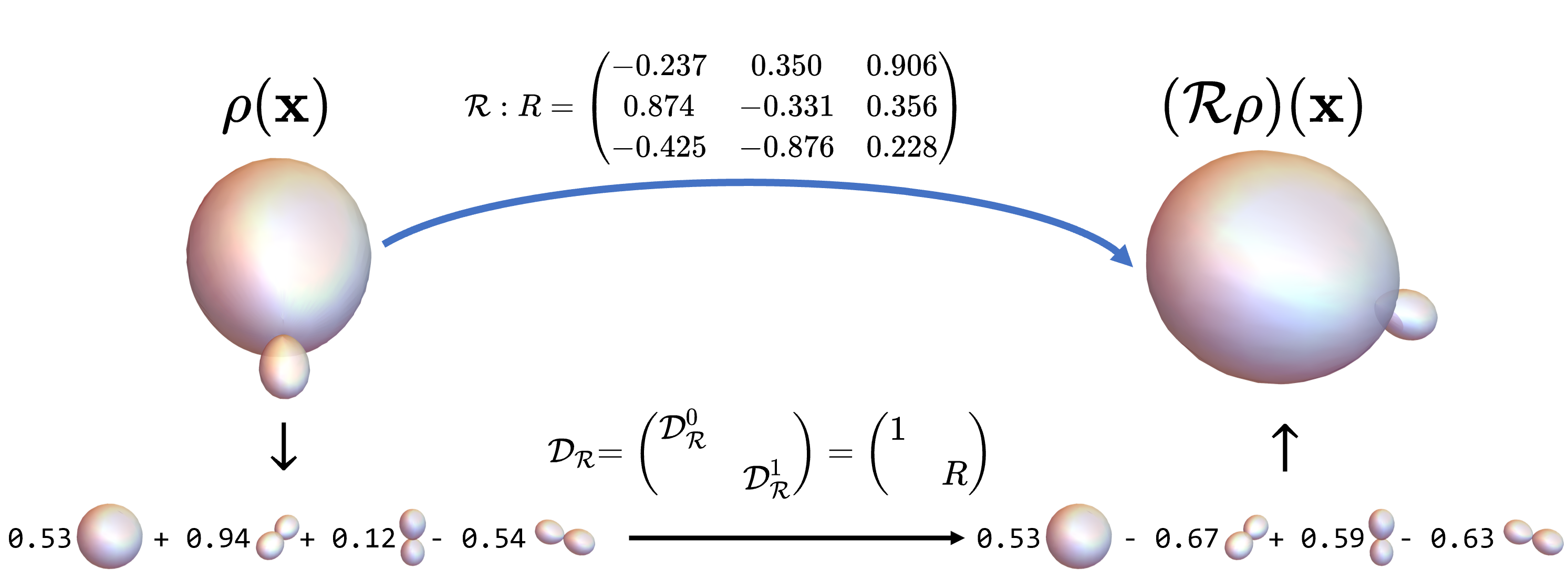}
    \caption{An illustration of rotation equivariance of linear combination of spherical harmonics as a continuous function on the unit sphere. For better visualization, the radial value used to plot the spherical harmonics is the squared density $|\rho|^2$. The calculation, however, is still done on the original (real-valued) spherical harmonics.}
    \label{fig:eqv_sh}
\end{figure}

One crucial property of the spherical tensors is that the tensor product is equivariant to rotation $\mathcal{R}\in SO(3)$:
\begin{propos} \label{propos:tp}
The tensor product of two spherical tensors satisfies the following identity:
\begin{equation}
    D_\mathcal{R}(\mathsf{a}\otimes \mathsf{b})=D_\mathcal{R}(\mathsf{a})\otimes D_\mathcal{R}(\mathsf{b})
\end{equation}
\end{propos}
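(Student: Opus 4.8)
The plan is to reduce the claim to the single classical fact that the Clebsch--Gordan coefficients intertwine the tensor-product representation $D^{\ell_1}\otimes D^{\ell_2}$ with the direct sum $\bigoplus_{J=|\ell_1-\ell_2|}^{\ell_1+\ell_2} D^J$. Both sides of the asserted identity are assembled degree-pair-by-degree-pair out of the blocks $\mathbf{a}^{\ell_1},\mathbf{b}^{\ell_2}$, and $D_\mathcal{R}$ acts linearly (Proposition~\ref{propos:rot}), so it suffices to prove the identity for one fixed pair $(\ell_1,\ell_2)$ and then sum the resulting identities over all such pairs. I would therefore fix $\ell_1,\ell_2$ throughout.

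First I would write out both objects in coordinates. Using the standard form of the spherical tensor product, $(\mathsf{a}\otimes\mathsf{b})^J_M=\sum_{m_1 m_2} C^{JM}_{\ell_1 m_1,\ell_2 m_2}\, a^{\ell_1}_{m_1} b^{\ell_2}_{m_2}$, where $C^{JM}_{\ell_1 m_1,\ell_2 m_2}$ are the entries of the Clebsch--Gordan matrix appearing in Eq.~\eqref{eqn:tfn}, the $(J,M)$-component of the left-hand side is $\sum_{M'} D^J_{MM'}(\mathcal{R})\sum_{m_1 m_2} C^{JM'}_{\ell_1 m_1,\ell_2 m_2}\, a^{\ell_1}_{m_1} b^{\ell_2}_{m_2}$. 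For the right-hand side I would apply Proposition~\ref{propos:rot} separately, so that $D_\mathcal{R}(\mathsf{a})$ has degree-$\ell_1$ block $\sum_{m_1'} D^{\ell_1}_{m_1 m_1'}(\mathcal{R})\, a^{\ell_1}_{m_1'}$ and similarly for $\mathsf{b}$, and then contract these with the Clebsch--Gordan coefficients to get the $(J,M)$-component of $D_\mathcal{R}(\mathsf{a})\otimes D_\mathcal{R}(\mathsf{b})$.

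The heart of the argument is then the product formula for Wigner matrices,
\begin{equation}
    D^{\ell_1}_{m_1 m_1'}(\mathcal{R})\, D^{\ell_2}_{m_2 m_2'}(\mathcal{R})
    =\sum_{J=|\ell_1-\ell_2|}^{\ell_1+\ell_2}\ \sum_{M,M'} C^{JM}_{\ell_1 m_1,\ell_2 m_2}\, C^{JM'}_{\ell_1 m_1',\ell_2 m_2'}\, D^J_{MM'}(\mathcal{R}),
\end{equation}
together with the orthogonality (unitarity) relations of the Clebsch--Gordan coefficients. Substituting this identity into the coordinate expression for the right-hand side and using the orthogonality relations to collapse the spurious $J$-sum, I expect to land exactly on the coordinate expression for $D_\mathcal{R}(\mathsf{a}\otimes\mathsf{b})$, block by block. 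Summing over all degree pairs $(\ell_1,\ell_2)$ then yields the stated identity for the full spherical tensors.

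I do not anticipate a genuine conceptual obstacle: the content is simply that the Clebsch--Gordan decomposition is an $SO(3)$-equivariant isomorphism, which is classical. The only real work is bookkeeping --- keeping primed and unprimed orders straight, making sure the tensor product on both sides refers to the same Clebsch--Gordan pairing, and citing the product formula and orthogonality relations (e.g. from \cite{edmonds1957angular}) rather than rederiving them. A cleaner but less elementary alternative, which I would note, is to observe that the map $\mathbf{a}^{\ell_1}\otimes\mathbf{b}^{\ell_2}\mapsto(\mathsf{a}\otimes\mathsf{b})$ is by construction an intertwiner between $D^{\ell_1}\otimes D^{\ell_2}$ and $\bigoplus_J D^J$, so the identity holds tautologically.
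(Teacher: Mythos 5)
Your proposal is correct. Note, however, that the paper does not actually write out a proof of this proposition: it simply cites Appendix~C of the TFN paper \cite{DBLP:journals/corr/tfn} and remarks that the identity is a natural corollary of the properties of irreducible representations --- which is precisely your ``cleaner but less elementary alternative'' that the Clebsch--Gordan pairing is by construction an intertwiner between $D^{\ell_1}\otimes D^{\ell_2}$ and $\bigoplus_J D^J$. Your primary route, the explicit coordinate computation combining the Clebsch--Gordan series for products of Wigner matrices, $D^{\ell_1}_{m_1 m_1'}D^{\ell_2}_{m_2 m_2'}=\sum_{J,M,M'}C^{JM}_{\ell_1 m_1,\ell_2 m_2}C^{JM'}_{\ell_1 m_1',\ell_2 m_2'}D^J_{MM'}$, with the orthogonality relations of the Clebsch--Gordan coefficients, is the standard way to make that citation self-contained, and the reduction to a fixed degree pair $(\ell_1,\ell_2)$ is legitimate because the rotation acts block-diagonally by degree (Proposition~\ref{propos:rot}). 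What your explicit version buys is independence from the external reference and a clear view of where the unitarity of the Clebsch--Gordan matrices enters; what the paper's (and your alternative) representation-theoretic one-liner buys is brevity and immunity to the index bookkeeping you rightly flag as the only delicate point, provided the same convention for the Wigner matrices is used on both sides.
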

The proof of this property can be found in the TFN paper \cite{DBLP:journals/corr/tfn} in Appendix C. Essentially, it is a natural corollary as the property of irreducible representations. Combining Eq.\eqref{eqn:rot_sh} and \eqref{eqn:gcn}, we can then design an equivariant message as:
\begin{equation}
    \mathsf{f}_u=\sum_{v\in\tilde{\mathcal{N}}(u)}\sum_{\ell k}w_{\ell k}\sum_{Jm} \varphi(\mathbf{r}_{uv})\otimes \mathsf{f}_v \label{eqn:message_tp}
\end{equation}
where $\varphi^m_{J}(\mathbf{r})=\varphi_{J}(r)Y^m_J(\hat{\mathbf{r}})$. Here, we use the same technique as TFN to restrict the radial functions to be independent of the order $m$ so that it is isotropic. $w_{\ell k}$ are the learnable weights. The tensor product gives a 4-dimensional tensor. The summation indices $J,m$ correspond to the two dimensions other than $\ell,k$.
One way to define such a tensor product for two spherical tensors comes from the coupling of angular momentum in physics. The tensor product $\mathsf{c}=\mathsf{a}\otimes \mathsf{b}$ is defined as
\begin{equation}
    C_{Jm}=\sum_{m_1=-\ell}^{\ell}\sum_{m_2=-k}^{k}a_{\ell m_1} b_{k m_2}\langle \ell m_1 k m_2|Jm\rangle \label{eqn:tp}
\end{equation}
where $\langle \ell m_1 k m_2|Jm\rangle$ are the Clebsch-Gordan coefficients, and are nonzero only when $|\ell-k|\le J\le \ell+k,-J\le m\le J$. Substituting Eq.\eqref{eqn:tp} into Eq.\eqref{eqn:message_tp} and ignoring the zero terms, we will get a summation of
\begin{equation}
    \mathbf{f}_u^\ell=\sum_{v\in\tilde{\mathcal{N}}(u)}\sum_{k\ge 0}\sum_{J=|k-\ell|}^{k+\ell}w_{\ell k}\varphi_J(r)\sum_{m=-J}^{J}Y^m_J(\hat{\mathbf{r}})Q_{Jm}^{\ell k}\mathbf{f}^k_v
\end{equation}
where $Q_{J m}^{\ell k}\left(m_{1} m_{2}\right)=\left\langle\ell m_{1} k m_{2}|J m\right\rangle$. Coalescing the weight into the learnable radial function  $\hat{\varphi}_J^{\ell k}=w_{\ell k}\varphi_J(r)$, we have our final message passing scheme defined in Eq.\eqref{eqn:tfn}.
With Proposition \ref{propos:tp}, we immediately have the following corollary:
\begin{theorem}\label{thm:eqv}
The message passing defined in Eq.\eqref{eqn:message_tp} (with infinitely many basis functions) is equivariant to rotation.
\end{theorem}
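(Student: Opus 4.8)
The plan is to track how each ingredient of the message in Eq.\eqref{eqn:message_tp} transforms when a rotation $\mathcal{R}\in SO(3)$ is applied to the input configuration, and to check that the output spherical tensor $\mathsf{f}_u$ acquires exactly the Wigner D-matrix action $D_\mathcal{R}$, which by Eq.\eqref{eqn:rot_wigner} is precisely the equivariance condition Eq.\eqref{eqn:eqv} for the continuous function built from these coefficients. Rotating the input means replacing each relative displacement $\mathbf{r}_{uv}$ by $R\mathbf{r}_{uv}$ and each node feature $\mathsf{f}_v$ by $D_\mathcal{R}(\mathsf{f}_v)$ (the hypothesis that the input feature map is equivariant), and the goal is to show the right-hand side of Eq.\eqref{eqn:message_tp} then equals $D_\mathcal{R}(\mathsf{f}_u)$. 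Note also that the neighborhood $\tilde{\mathcal{N}}(u)=\mathcal{N}(u)\cup\{u\}$ is unchanged, since rotations preserve the pairwise distances that define it.

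First I would record the behavior of the edge tensor $\varphi(\mathbf{r})$, whose components are $\varphi_J^m(\mathbf{r})=\varphi_J(r)Y_J^m(\hat{\mathbf{r}})$. Because $r=|\mathbf{r}|$ is rotation-invariant, the radial nets $\varphi_J$ are untouched and only the angular factor moves; applying the spherical-harmonics rotation rule Eq.\eqref{eqn:rot_sh} degree by degree (exactly the argument in the proof of Proposition \ref{propos:rot}), the degree-$J$ block of $\varphi(R\mathbf{r})$ is obtained from that of $\varphi(\mathbf{r})$ by multiplication with a Wigner D-matrix of degree $J$. Writing this in spherical-tensor notation as $\varphi(R\mathbf{r})=D_\mathcal{R}(\varphi(\mathbf{r}))$, with the convention fixed so that the matrix here is the same $D_\mathcal{R}$ that acts on the node tensors, is the only place where one must be careful about the direction of the rotation (i.e.\ $R$ versus $R^{-1}$ and the definition $(\mathcal{R}f)(\mathbf{x})=f(R^{-1}\mathbf{x})$).

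Next I would invoke Proposition \ref{propos:tp}. Under the rotated input, each summand transforms as
\[
\varphi(R\mathbf{r}_{uv})\otimes D_\mathcal{R}(\mathsf{f}_v)=D_\mathcal{R}\bigl(\varphi(\mathbf{r}_{uv})\bigr)\otimes D_\mathcal{R}(\mathsf{f}_v)=D_\mathcal{R}\bigl(\varphi(\mathbf{r}_{uv})\otimes\mathsf{f}_v\bigr),
\]
where the last equality is Proposition \ref{propos:tp}. Since $D_\mathcal{R}$ is linear and block-diagonal across degrees, it commutes with multiplication by the scalar weights $w_{\ell k}$ and with all the sums over $v\in\tilde{\mathcal{N}}(u)$, over $\ell,k$, and over $J,m$; pulling it out of the whole expression gives $D_\mathcal{R}(\mathsf{f}_u)$. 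Interpreting $\mathsf{f}_u$ as coefficients of the spherical-harmonic basis and using Eq.\eqref{eqn:rot_wigner} then upgrades this to the statement that the output continuous function satisfies Eq.\eqref{eqn:eqv}.

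The main obstacle I anticipate is not a single calculation but the bookkeeping of conventions: one has to pin down once and for all how $\mathcal{R}$ acts on the coordinate frame, the scalar radial nets, the spherical harmonics $Y_J^m$, the Clebsch–Gordan coupling, and the node tensors, so that the $D_\mathcal{R}$'s produced from the edge features and from the node features are literally the same matrices and genuinely cancel against the single $D_\mathcal{R}$ we want on the output. A secondary point is well-definedness in the ``infinitely many basis functions'' regime: the sums over $\ell,k,J$ are formally infinite, so one should either argue $L^2$-convergence from square-integrability of $\varphi$ and of each $\mathsf{f}_v$, or simply observe that the equivariance identity holds separately within each degree $\ell$—the argument never couples the limit across degrees—so it persists in the limit.
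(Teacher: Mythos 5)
Your proposal is correct and follows essentially the same route as the paper's proof of Theorem \ref{thm:eqv}: both rotate the inputs so that $\varphi(\mathbf{r}_{uv})$ and $\mathsf{f}_v$ each pick up a Wigner D-matrix action (Proposition \ref{propos:rot} / Eq.\eqref{eqn:rot_wigner}), apply Proposition \ref{propos:tp} to merge the two $D_\mathcal{R}$'s across the tensor product, and use linearity to pull $D_\mathcal{R}$ past the weights $w_{\ell k}$ and the sums, giving $D_\mathcal{R}(\mathsf{f}_u)$. Your added remarks on the invariance of the radial nets, the preservation of the neighborhood, the $R$ versus $R^{-1}$ convention, and the degree-wise validity in the infinite-basis limit are points the paper treats implicitly, but they do not constitute a different argument.
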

\begin{proof}
According to Proposition \ref{propos:rot}, the two spherical tensors in Eq.\eqref{eqn:message_tp} transform as in Eq.\eqref{eqn:rot_wigner}. Therefore, we have
\begin{equation}
\begin{aligned}
    \mathcal{T}(\mathcal{R}\mathsf{f}_u)&=\sum_{v\in\tilde{\mathcal{N}}(u)}\sum_{\ell k}w_{\ell k}\sum_{J m} \mathcal{R}\varphi(\mathbf{r}_{uv})\otimes \mathcal{R}\mathsf{f}_v\\
    &=\sum_{v\in\tilde{\mathcal{N}}(u)}\sum_{\ell k}w_{\ell k}\sum_{J m} D_\mathcal{R}\varphi(\mathbf{r}_{uv})\otimes D_\mathcal{R}\mathsf{f}_v\\
    &=\sum_{v\in\tilde{\mathcal{N}}(u)}\sum_{\ell k}w_{\ell k}\sum_{J m} D_\mathcal{R}(\varphi(\mathbf{r}_{uv})\otimes\mathsf{f}_v)\\
    &=D_\mathcal{R}\left[\sum_{v\in\tilde{\mathcal{N}}(u)}\sum_{\ell k}w_{\ell k}\sum_{J m} \varphi(\mathbf{r}_{uv})\otimes\mathsf{f}_v \right]\\
    &=\mathcal{R}(\mathcal{T}\mathsf{f}_u)
\end{aligned}
\end{equation}
$w_{\ell k}$ can be moved inside $D_\mathcal{R}$ because it is essentially a linear combination of equivariant functions, thus it is also equivariant.
\end{proof}

Now let's consider finite approximation where $0\le \ell \le L$ for a fixed $L$. We have the following proposition:
\begin{propos}\label{propos:finite}
Let $\mathcal{P}_L:\mathsf{a}=\{\mathbf{a}_{\ell}:\ell\ge 0\}\mapsto \mathcal{P}_L\mathsf{a}=\{\mathbf{a}_{\ell}:0\le \ell\le L\}$ be the projection tensor operator that projects the spherical tensor onto the bases with degree less or equal to $L$, then $\mathcal{P}_L$ is rotation equivariant:
\begin{equation}
    \mathcal{P}_L(\mathcal{R} \mathsf{a})=\mathcal{R}(\mathcal{P}_L \mathsf{a})
\end{equation}
\end{propos}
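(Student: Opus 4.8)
The plan is to exploit the block-diagonal structure of the Wigner D-matrix representation established in Proposition \ref{propos:rot}. Recall that for a spherical tensor $\mathsf{a}=\{\mathbf{a}_\ell:\ell\ge 0\}$ the rotation acts degree-wise, $(\mathcal{R}\mathsf{a})_\ell = D^\ell_\mathcal{R}\mathbf{a}_\ell$, where $D^\ell_\mathcal{R}$ depends only on $\ell$ and mixes only the $2\ell+1$ components within that degree; there is no coupling across different degrees. This is the key structural fact that makes the two operations commute, and it in turn rests on Eq.\eqref{eqn:rot_sh}, which says rotation never changes the degree of a spherical harmonic.

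First I would write both sides out component by component. On the one hand, $\mathcal{P}_L(\mathcal{R}\mathsf{a})$ is obtained by first forming $\{D^\ell_\mathcal{R}\mathbf{a}_\ell : \ell\ge 0\}$ and then discarding every block with $\ell > L$, leaving $\{D^\ell_\mathcal{R}\mathbf{a}_\ell : 0\le\ell\le L\}$. On the other hand, $\mathcal{R}(\mathcal{P}_L\mathsf{a})$ is obtained by first discarding the blocks with $\ell>L$, leaving $\{\mathbf{a}_\ell : 0\le\ell\le L\}$, and then applying the rotation, which acts as $D^\ell_\mathcal{R}$ on each surviving block and yields $\{D^\ell_\mathcal{R}\mathbf{a}_\ell : 0\le\ell\le L\}$. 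The two tensors are identical entry by entry, which is the claimed identity.

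The only point needing care — and it is bookkeeping rather than a genuine obstacle — is to check that the rotation operator on the truncated tensor space is precisely the direct sum $\bigoplus_{\ell=0}^{L} D^\ell_\mathcal{R}$, i.e.\ that truncating still gives a bona fide action of $SO(3)$ on the finite-dimensional space. This is immediate: each $D^\ell$ is a representation and a direct sum of representations is a representation; equivalently, $\mathrm{span}\{Y_\ell^m : \ell\le L\}$ is an $SO(3)$-invariant subspace of $L^2(S^2)$ because rotation never raises the degree. Hence $\mathcal{P}_L$ is an intertwiner between the full and truncated representations, the diagram commutes, and there are no analytic or convergence subtleties since everything becomes finite-dimensional once $\mathcal{P}_L$ is applied.
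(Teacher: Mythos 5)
Your proof is correct and follows essentially the same route as the paper: both rest on the fact from Proposition \ref{propos:rot} (Eq.\eqref{eqn:rot_wigner}) that rotation acts block-diagonally, degree by degree, so discarding the blocks with $\ell>L$ commutes with applying $D^\ell_\mathcal{R}$ to each block. Your extra remark that the truncated span is an $SO(3)$-invariant subspace is a harmless elaboration of the same observation.
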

\begin{proof}
It suffices to note that for each degree $\ell$, the corresponding component $\mathbf{a}^\ell$ rotates according to Eq.\eqref{eqn:rot_wigner}, which only depends on the components with the same degree. Therefore, $\mathcal{P}_L$ is equivariant as the components with degree $\ell\le L$ are preserved on both sides.
\end{proof}

\begin{propos}\label{propos:comp}
The composition of two equivariant operators $\mathcal{T}_1\circ \mathcal{T}_2$ is also equivariant.
\end{propos}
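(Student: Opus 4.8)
The plan is to unwind the definition of equivariance in Eq.\eqref{eqn:eqv} and apply it twice, once for each factor. Write $\mathcal{T}_1$ and $\mathcal{T}_2$ for the two operators, each equivariant with respect to the rotation action, so that $\mathcal{T}_i(\mathcal{R}f)=\mathcal{R}(\mathcal{T}_i f)$ holds for every rotation $\mathcal{R}$ and every admissible input $f$. The only structural point worth checking first is that the intermediate space carries the same rotation action on both sides of the composition: in our setting every space involved is a space of functions on $\mathbb{R}^3$ with the action $(\mathcal{R}f)(\mathbf{x})=f(R^{-1}\mathbf{x})$, or equivalently the Wigner-D action $D_\mathcal{R}$ on the coefficient tensors as in Eq.\eqref{eqn:rot_wigner}, so the output of $\mathcal{T}_2$ may legitimately be substituted into the equivariance relation for $\mathcal{T}_1$.

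Given that, the argument is a direct chain of equalities:
\begin{equation}
(\mathcal{T}_1\circ\mathcal{T}_2)(\mathcal{R}f)=\mathcal{T}_1\big(\mathcal{T}_2(\mathcal{R}f)\big)=\mathcal{T}_1\big(\mathcal{R}(\mathcal{T}_2 f)\big)=\mathcal{R}\big(\mathcal{T}_1(\mathcal{T}_2 f)\big)=\mathcal{R}\big((\mathcal{T}_1\circ\mathcal{T}_2)f\big),
\end{equation}
where the second equality uses equivariance of $\mathcal{T}_2$, the third uses equivariance of $\mathcal{T}_1$ applied to the function $\mathcal{T}_2 f$, and the first and last are just the definition of composition. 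Since $\mathcal{R}$ was arbitrary, this is precisely the equivariance condition Eq.\eqref{eqn:eqv} for $\mathcal{T}_1\circ\mathcal{T}_2$.

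There is essentially no genuine obstacle here — the statement is just the functoriality of the equivariance property under composition — so the \emph{hard part} is only bookkeeping: keeping the notation for the group action consistent across the three spaces, and, if one wants the fully general version with a homomorphism $\mathcal{S}:G\to H$, observing that the relevant homomorphisms themselves compose. Combined with Proposition \ref{propos:finite} and Theorem \ref{thm:eqv}, this proposition immediately yields equivariance of the entire finite-approximation pipeline: the TFN-style message passing, the projection $\mathcal{P}_L$, the norm-based nonlinearities, and the residual operator layer are each equivariant, hence so is their composition.
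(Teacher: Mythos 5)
Your proof is correct and follows essentially the same route as the paper's: the identical chain of equalities $(\mathcal{T}_1\circ\mathcal{T}_2)(\mathcal{R}f)=\mathcal{T}_1(\mathcal{R}(\mathcal{T}_2 f))=\mathcal{R}((\mathcal{T}_1\circ\mathcal{T}_2)f)$, applying the equivariance of each operator in turn. The extra remarks about consistency of the group action on the intermediate space and about composing homomorphisms are fine but not needed beyond what the paper already does.
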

\begin{proof}
\begin{equation}
    (\mathcal{T}_1\circ \mathcal{T}_2)(\mathcal{R}\mathsf{a})
    =\mathcal{T}_1(\mathcal{T}_2(\mathcal{R}\mathsf{a}))
    =\mathcal{T}_1(\mathcal{R}(\mathcal{T}_2\mathsf{a}))
    =\mathcal{R}(\mathcal{T}_1\mathcal{T}_2\mathsf{a})
\end{equation}
\end{proof}

Combining Theorem \ref{thm:eqv} and Proposition \ref{propos:finite}, \ref{propos:comp}, we have the equivariant property with finite approximation:
\begin{corollary}
The result in Theorem \ref{thm:eqv} also holds for a finite degree of spherical basis $0\le \ell \le L$.
\end{corollary}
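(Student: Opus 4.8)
The plan is to realise the finite-degree message passing as a composition of maps that are \emph{already} known to be equivariant, and then to invoke Proposition~\ref{propos:comp}. When we restrict the expansion to degrees $0\le\ell\le L$, the input spherical tensor carried by each node has components only of degree $\le L$, i.e.\ it equals $\mathcal{P}_L\mathsf{f}_v$ in the notation of Proposition~\ref{propos:finite}; and Eq.~\eqref{eqn:tfn} retains only those output components $\mathbf{f}_u^\ell$ with $\ell\le L$, which is exactly $\mathcal{P}_L$ applied to the full (untruncated) message of Eq.~\eqref{eqn:message_tp}. Hence the truncated operator factors as $\mathcal{T}_L=\mathcal{P}_L\circ\mathcal{T}\circ\mathcal{P}_L$, where $\mathcal{T}$ is the infinite-basis operator of Theorem~\ref{thm:eqv}.

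First I would verify that this factorisation is a genuine identity at the level of coefficients. The Clebsch--Gordan coupling in Eq.~\eqref{eqn:tp} lets an input component of degree $k$ feed output components of degree $J$ with $|k-\ell'|\le J\le k+\ell'$; but since we keep only $k\le L$ on the input side and read off only $J\le L$ on the output side, none of the terms lying outside the truncated ranges is ever required, so $\mathcal{P}_L\circ\mathcal{T}\circ\mathcal{P}_L$ produces precisely the numbers of the directly-truncated scheme. Then Theorem~\ref{thm:eqv} supplies the equivariance of $\mathcal{T}$ and Proposition~\ref{propos:finite} that of each $\mathcal{P}_L$, so two applications of Proposition~\ref{propos:comp} give equivariance of $\mathcal{P}_L\circ\mathcal{T}\circ\mathcal{P}_L=\mathcal{T}_L$, which is the assertion.

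The main obstacle is not a hard computation but the bookkeeping in the first step: one must argue carefully that truncating both before and after the tensor-product convolution is equivalent to running the ``honest'' finite scheme --- that the coefficient diagram really commutes and that no cross-degree term surviving in the directly-truncated model is accidentally annihilated by the projections, nor conversely. Once that is pinned down the equivariance is immediate. The same composition argument also covers the whole finite-approximation pipeline of Eq.~\eqref{eqn:model_overall}: the finite basis reconstruction $\{\mathbf{f}^\ell\}_{\ell\le L}\mapsto\sum_{n\ell m}f_{n\ell m}\psi_{n\ell m}$ intertwines the Wigner D-matrix action with the function-space rotation $\mathcal{R}$ (this is Proposition~\ref{propos:rot} read in reverse, restricted to $\ell\le L$), the residual layer of Eq.~\eqref{eqn:res} maps equivariant inputs to a rotation-invariant scalar, and by linearity together with Proposition~\ref{propos:comp} their sum $\hat{\rho}$ is equivariant.
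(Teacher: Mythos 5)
Your proposal is correct and follows essentially the same route as the paper: the finite-degree scheme is identified with the truncation of the infinite-basis message passing, and equivariance then follows by combining Theorem~\ref{thm:eqv} with the equivariance of the projection $\mathcal{P}_L$ (Proposition~\ref{propos:finite}) and closure under composition (Proposition~\ref{propos:comp}). Your explicit check that $\mathcal{P}_L\circ\mathcal{T}\circ\mathcal{P}_L$ reproduces the directly-truncated coefficients (via the Clebsch--Gordan selection rules) is a detail the paper leaves implicit, and it is a welcome addition rather than a deviation.
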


Notice that for degree 0 features (scalars), equivariance is equivalent to invariance, and it can be obtained with projection operator $\mathcal{P}_0$, we have
\begin{corollary}
The residual operator layer defined in Eq.\eqref{eqn:res} with finite approximation is invariant with respect to the grid frame, thus equivariant to rotation under the global frame.
\end{corollary}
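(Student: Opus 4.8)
The plan is to recognize the residual operator layer of Eq.\eqref{eqn:res} as a special instance of the equivariant message passing already analyzed, postcomposed with the degree-$0$ projection $\mathcal{P}_0$, and then to exploit the fact that the degree-$0$ Wigner D-matrix is the trivial $1\times1$ identity. First I would observe that, treating the query point $p$ at coordinate $\mathbf{x}$ as a virtual node whose only attribute is its position, the summand $W^{k}_{\text{res}}(\mathbf{x}_v-\mathbf{x})\mathbf{f}^k_v$ is exactly the $\ell=0$ output component of the TFN-style message in Eq.\eqref{eqn:tfn}, i.e. it is built from the spherical-harmonic expansion of $\varphi(\mathbf{x}_v-\mathbf{x})$ coupled to $\mathbf{f}^k_v$ through the Clebsch--Gordan coefficients, precisely as derived for Eq.\eqref{eqn:message_tp}. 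Hence $z(\mathbf{x})=\mathcal{P}_0\big(\mathcal{T}\mathsf{f}\big)$ read off at the virtual node, with $\mathcal{T}$ the operator of Eq.\eqref{eqn:message_tp}. I would also note that $\mathcal{N}(p)$ is determined by a distance cut-off and distances are rotation-invariant, so rotating the whole configuration does not change which nodes contribute to the sum.

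Next I would invoke the results already in hand: $\mathcal{T}$ is rotation-equivariant by Theorem \ref{thm:eqv} together with its finite-degree corollary, and $\mathcal{P}_0$ is rotation-equivariant by Proposition \ref{propos:finite} with $L=0$; by Proposition \ref{propos:comp} the composition $\mathcal{P}_0\circ\mathcal{T}$ is rotation-equivariant. The equivariance relation for the degree-$0$ component then reads $\mathbf{g}^0=D^0_\mathcal{R}\mathbf{f}^0=\mathbf{f}^0$, since $D^0_\mathcal{R}$ is the $1\times1$ identity matrix for every $\mathcal{R}\in SO(3)$. Therefore the scalar value produced at the query point is unchanged when the coordinates and the input feature map are re-expressed in a rotated frame — that is, $z$ does not depend on the orientation of the grid frame. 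Transferring back to the global frame, rotating the molecule and the query coordinate by $R$ produces the same output value at the rotated query point as the original output at the original point, which is exactly equivariance in the sense of Eq.\eqref{eqn:eqv} with the trivial homomorphism $\mathcal{S}$ acting on the scalar values.

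I expect the only real subtlety to be bookkeeping rather than mathematics: one must make explicit that $W^{k}_{\text{res}}$ inherits the tensor-product / Clebsch--Gordan structure of Eq.\eqref{eqn:tfn} so that Theorem \ref{thm:eqv} genuinely applies to it, and one must be careful that ``invariance of a scalar'' here means that the pointwise value is preserved while its domain point is transported by $R$, so that the layer, viewed as a function on $\mathbb{R}^3$, is still equivariant and not literally constant. Once this correspondence is in place, the proof is a short chain: identify the layer as $\mathcal{P}_0\circ\mathcal{T}$, apply Theorem \ref{thm:eqv} and Propositions \ref{propos:finite}--\ref{propos:comp}, and use $D^0_\mathcal{R}=\mathrm{Id}$. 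As an immediate consequence, combining this with the preceding corollary, the full prediction $\hat\rho$ of Eq.\eqref{eqn:model_overall}, being a sum of the equivariant coefficient expansion and the equivariant residue $z$, is itself equivariant.
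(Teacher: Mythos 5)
Your proof is correct and follows essentially the same route as the paper: the paper likewise treats the residual layer as the degree-$0$ ($\mathcal{P}_0$) output of the equivariant TFN-style message passing at the query point and combines Theorem \ref{thm:eqv} with Propositions \ref{propos:finite} and \ref{propos:comp}, using the fact that for scalars equivariance coincides with invariance. Your write-up simply makes explicit the virtual-node identification and the triviality of $D^0_\mathcal{R}$ that the paper leaves implicit.
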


Combining the results above, we immediately obtain the rotation equivariance of our proposed model.
\begin{theorem}
The proposed model in Eq.\eqref{eqn:model_overall} with finite approximation and the residual operator satisfies the equivariance condition defined in Eq.\eqref{eqn:eqv}.
\end{theorem}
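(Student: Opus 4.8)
The plan is to write the model map $\mathcal{T}:f_{\mathrm{in}}\mapsto\hat\rho$ of Eq.\eqref{eqn:model_overall} as a sum of two branches, each itself a composition of maps already shown to be equivariant, and then invoke Proposition \ref{propos:comp} together with the elementary fact that a sum of equivariant maps is equivariant (immediate, since $\mathcal{R}$ acts linearly on $L^2(\mathbb{R}^3)$). Concretely, $\hat\rho=\hat\rho_{\mathrm{coef}}+z$, where $\hat\rho_{\mathrm{coef}}(\mathbf{x})=\sum_{u\in\mathcal{V}}\sum_{n\ell m}f_{n\ell m,u}\,\psi_{n\ell m}(\mathbf{x}-\mathbf{r}_u)$ is the coefficient (synthesis) branch and $z$ is the residual operator layer of Eq.\eqref{eqn:res}. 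The synthesis branch further factors as $\hat\rho_{\mathrm{coef}}=\Psi\circ\mathcal{P}_L\circ\mathcal{M}$, where $\mathcal{M}$ denotes the stack of message-passing layers of Eq.\eqref{eqn:message_tp} (with the norm nonlinearities), $\mathcal{P}_L$ is the truncation to degrees $\ell\le L$, and $\Psi$ is the synthesis map sending a finite spherical tensor of coefficients to the continuous function obtained by the multicentric expansion on the basis $\psi_{n\ell m}$.

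First, the message-passing stack $\mathcal{M}$ is equivariant in the sense that it sends an equivariant input feature field to output coefficient fields whose degree-$\ell$ blocks transform by the Wigner matrices $D^\ell_{\mathcal{R}}$: for a single layer this is Theorem \ref{thm:eqv}; the finite-degree version follows from Proposition \ref{propos:finite} ($\mathcal{P}_L$ equivariant) and Proposition \ref{propos:comp}; and for several stacked layers one applies Proposition \ref{propos:comp} repeatedly. Here I use the hypothesis noted in Section \ref{sec:eqv} that the initial feature is equivariant — e.g. the isotropic degree-$0$ Gaussians, or scalar atom-type embeddings, both of which are invariant hence trivially equivariant.

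The key new step is to show that the synthesis map $\Psi$ is equivariant, i.e. that feeding it the rotated coefficients $\{D^\ell_{\mathcal{R}}\mathbf{f}^\ell\}$ (attached to the rotated centers) yields exactly $\mathcal{R}$ applied to the function produced from $\{\mathbf{f}^\ell\}$. This is essentially Proposition \ref{propos:rot} run backwards, plus bookkeeping for the multicentric shifts. Since the basis factors as $\psi_{n\ell m}(\mathbf{r})=R_n^\ell(|\mathbf{r}|)\,Y_\ell^m(\hat{\mathbf{r}})$ (Eq.\eqref{eqn:basis}), the radial factor is rotation-invariant ($|R^{-1}\mathbf{r}|=|\mathbf{r}|$) and, by Eq.\eqref{eqn:rot_sh}, $Y_\ell^m(R^{-1}\hat{\mathbf{r}})=\sum_{m'}D^\ell_{mm'}(\mathcal{R})\,Y_\ell^{m'}(\hat{\mathbf{r}})$; the atom centers transform as $\mathbf{r}_u\mapsto R\mathbf{r}_u$ with the molecule (the input Dirac field transforms by $\mathcal{R}$), so $R^{-1}\mathbf{x}-\mathbf{r}_u=R^{-1}(\mathbf{x}-R\mathbf{r}_u)$ and the displacement argument is handled consistently. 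Substituting these into the multicentric sum and matching coefficients within each degree block shows that the function built from $\{D^\ell_{\mathcal{R}}\mathbf{f}^\ell\}$ and centers $\{R\mathbf{r}_u\}$ equals $\hat\rho_{\mathrm{coef}}(R^{-1}\mathbf{x})=(\mathcal{R}\hat\rho_{\mathrm{coef}})(\mathbf{x})$. Note this argument does not require the basis to be complete or orthogonal: equivariance is a statement about how $\Psi$ intertwines the two group actions, independent of approximation error.

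Finally, the residual branch $z$ of Eq.\eqref{eqn:res} is equivariant: it outputs a degree-$0$ (scalar) quantity aggregated from the equivariant neighbor features $\mathbf{f}_v^k$ and the relative displacements $\mathbf{x}_v-\mathbf{x}$, so by the corollary on the residual operator layer it is invariant with respect to the grid frame, which for a scalar field is precisely the identity $(\mathcal{R}z)(\mathbf{x})=z(R^{-1}\mathbf{x})=$ (the residual recomputed on the rotated input). Adding the two equivariant branches and using linearity of $\mathcal{R}$ gives $\mathcal{T}(\mathcal{R}f_{\mathrm{in}})=\mathcal{R}(\mathcal{T}f_{\mathrm{in}})$, i.e. Eq.\eqref{eqn:eqv}. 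The main obstacle is the synthesis step: one must verify carefully that ``interpreting an equivariant spherical tensor as coefficients'' genuinely yields a function transforming by $\mathcal{R}$ — tracking the multicentric shifts and the radial/angular split so that the Wigner-matrix action on coefficients lines up (up to the usual transpose/convention ambiguity in $D^\ell_{\mathcal{R}}$) with the coordinate-frame rotation; the message-passing and residual pieces are then essentially immediate from results already established in this appendix.
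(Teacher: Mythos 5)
Your proposal is correct and follows essentially the same route as the paper: it assembles the result from Theorem \ref{thm:eqv}, Propositions \ref{propos:finite} and \ref{propos:comp}, and the corollary on the residual operator layer, exactly as the paper's (very brief) proof does. The only difference is that you make explicit the synthesis step --- checking that Wigner-rotated coefficients attached to rotated centers reproduce $\mathcal{R}\hat\rho$, including the multicentric shift bookkeeping --- which the paper leaves implicit in Proposition \ref{propos:rot} and its remark on translation equivariance; this is a useful clarification rather than a genuinely different argument.
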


\section{Graph Spectral Theory and Graphon Convolution}\label{sec:spectral}
In this section, we will introduce some preliminary for graph spectral theory and demonstrate the deduction for graphon convolution with more details. The basic concept of graphon can be found in various mathematics or signal processing references \cite{DBLP:journals/tsp/MorencyL21,Bojan89graphon,lovász2012large}.

\subsection{Graph Spectral Theory}
We begin with the graph spectral theory for discrete graphs. For a discrete graph $\mathcal{G}=(\mathcal{V,E})$, the \emph{graph Fourier transform} (GFT) is defined as
\begin{equation}
    \hat{\mathbf{x}}=U^\top \mathbf{x}
\end{equation}
where $S=U\Lambda U^\top$ is the eigenvalue decomposition of the graph shift operator $S$. A \emph{graph shift operator} is a diagonalizable matrix $S\in\mathbb{R}^{N\times N}$ satisfying $S_{ij}=0$ for $i\ne j,(i,j)\not\in\mathcal{E}$. The graph Laplacian $L=I-A$ and the normalized version $L=I-D^{-1/2}AD^{-1/2}$ as was used in GCN \cite{DBLP:conf/iclr/KipfW17} where $A$ is the adjacency matrix are such operators. As clear as this definition of a GFT is, it remains computationally prohibitive to implement the filtering operation on a large graph in this way. To filter the graph frequencies, a polynomial approach is thus adopted on the eigenvalues:
\begin{equation}
    H\mathbf{x}=\sum_{k=0}^K w_k L^k\mathbf{x}
\end{equation}
where $w_k$ are the learnable parameters. In the GCN formulation, the authors used the diagonalized matrix $\Lambda$ instead of $L$, which is essentially equivalent. 

We now switch to the continuous \emph{graphon} setting. As defined in Eq.\eqref{eqn:graphon}, a graphon is a symmetric square-integrable function. The kernel $W$ induces an operator $\mathcal{T}_W$ defined in Eq.\eqref{eqn:fredholm}. As $W$ is symmetric and square-integrable, $\mathcal{T}_W$ is a self-adjoint operator that can be decomposed as 
\begin{equation}
    \mathcal{U}\mathcal{T}_W=\Lambda\mathcal{U}
\end{equation}
where $\mathcal{U}$ is some unitary operator and $\Lambda$ is a \emph{mulplication operator}, i.e., there exists a function $\xi(\mathbf{x})$ such that for all $f(\mathbf{x})$, $\Lambda f(\mathbf{x})=\xi(\mathbf{x})f(\mathbf{x})$. This directly follows the result of the spectral theorem for self-adjoint operators. In this way, we may similarly define the \emph{graphon Fourier transform} as 
\begin{equation}
    \hat{f} = \mathcal{U} f
\end{equation}

Following the polynomial approach of approximating the graphon filter, we first define the power series of $\mathcal{T}_W$ as in Eq.\eqref{eqn:pow} and use the Chebyshev polynomials of $\mathcal{T}_W$ to approximate the graphon filter $\mathcal{H}$ as $\mathcal{H}f\approx\theta_1 f+\theta_2\mathcal{T}_W f$.
Either way, parameterization and evaluation of $\mathcal{T}_W f$ is required. As mentioned in Sec.\ref{sec:graphon}, our model essentially operates on the eigenvalues of the operator $\mathcal{T}_W$. In the graph spectral point of view, the eigenvalues are the spectrum of the operator. Therefore, any spectral filtering can be effectively viewed as \emph{graphon convolution}. More details regarding parameterization will be discussed below.

\subsection{Approximating Graphon Convolution}
We now consider parameterization and evaluation of $\mathcal{T}_W$ to deduce Eq.\ref{eqn:gcn}. For any complete orthonormal basis $\{\psi_k\}_{k=1}^\infty$ of $L^2(\mathcal{D})$, any square-integrable function $f$ can be expanded as $f=\sum_{k=1}^\infty f_k\psi_k$ where $f_k=\int_\mathcal{D}f(\mathbf{x})\psi_k(\mathbf{x})d\mathbf{x}$. We can then arrange the transform $g=\mathcal{T}_W f$ as the following matrix-vector form $\mathbf{g=Wf}$, where
\begin{equation}
    W_{ij}=\int_\mathcal{D} \psi_i(\mathbf{x})\int_\mathcal{D} W(\mathbf{x},\mathbf{y})\psi_j(\mathbf{y})d\mathbf{x}d\mathbf{y}
\end{equation}

If $\{\psi_k\}_{k=1}^\infty$ coincide with the eigenfunctions $\{\phi_k\}_{k=1}^\infty$ which satisfy
\begin{equation}
    \mathcal{T}_W\phi_k = \lambda_k\phi_k
\end{equation}
We have $W_{ij}=\lambda_j\int_\mathcal{D}\phi_i(\mathbf{x})\phi_j(\mathbf{x})d\mathbf{x}$. For the unicentric setting, $W_{ij}$ is non-zero only when $i=j$ (the self-interaction term). For the multicentric setting, however, the computation is different. Recall that in the multicentric setting, we assume the global feature function is the summation of all atom-centered functions 
\begin{equation}
    \hat{\rho}(\mathbf{x})=\sum_{u\in\mathcal{V}}\sum_{i=1}^\infty f_{i,u}\psi_i(\mathbf{x}-\mathbf{r}_u)
\end{equation}
where $\mathbf{r}_u$ is the coordinate of center $u$. Similarly, considering one center at the origin with the other at $\mathbf{r}$, we have 
\begin{equation}
W_{ij}=\int_\mathcal{D}\phi_i(\mathbf{x})\mathcal{T}_W\phi_j(\mathbf{x-r})d\mathbf{x}
=\lambda_j\int_\mathcal{D}\phi_i(\mathbf{x})\phi_j(\mathbf{x-r})d\mathbf{x}
\end{equation}

The ``overlap integral'' $S_{ij}(\mathbf{r})$ arises here and we further parameterize the integral with $w_{ij}$ as the basis transformation also involves index $i$. Therefore, using the above matrix-vector formulation, we have the following parameterization:
\begin{equation}
    f_i\gets\sum_{j=1}^\infty w_{ij}S_{ij}(\mathbf{r})f_j
\end{equation}
If we also assume the locality of the interaction between two expansion centers, we can sum over all neighboring nodes to give the result in Eq.\eqref{eqn:gcn}. The locality assumption often holds as the basis functions decay exponentially. Therefore, ideally, the overlap integral between two far-away centers should be negligible.

\subsection{Graphon Convolution and Continuous MPNN}

Previous work regarding continuous message-passing neural networks is available. We briefly review them here and discuss their relation to our proposed method of graphon convolution. The idea of generalizing the discrete message-passing paradigm to continuous cases is essentially the same procedure as we have described in Sec.\ref{sec:intui} and all previous work used Eq.\eqref{eqn:fredholm} to formulate the continuous message passing. For example, \cite{DBLP:conf/nips/RuizCR20} proposed WNN (W refers to the graphon kernel $W$) as the limiting object GNNs with an increasing number of nodes and explored the transferability of this continuous formulation. \cite{DBLP:conf/nips/MaskeyLLK22} proposed cMPNN that explicitly modeled continuous functions and provided theoretical guarantees of the convergence and generalization error under discretization. \cite{DBLP:conf/acssc/WangRR22} proposed MNN for modeling mappings between continuous manifolds and also leveraged the graph limit view of large graphs.

Though sharing a similar idea of utilizing continuous geometric structures, our method is fundamentally different from the above models. Most significantly, in the above work, the authors either explicitly constructed the graphon kernel $W$ (WNN, MNN) or directly estimated the Fredholm integral (cMPNN) in a similar fashion as various neural operators. Our approach, on the other hand, implicitly constructed the graphon and parameterized them in the spectral domain. We noted in Sec.\ref{sec:intui} that the kernel $W$ does not have a canonical form in our setting and Monte Carlo estimation is prohibitive for large voxels. Instead, we defined a basis set and demonstrated in previous subsections that transformation on the coefficients can be also viewed as graphon convolution in the spatial domain. In this way, we implicitly assume that there exists a different graphon for each data sample defined by their discrete structure and their categorical information. Nonetheless, the parameterization of graphon was done with the same graphon convolution for the whole dataset, as we expected this information to be generalizable across different samples. In the abovementioned work, however, a different net needs to be trained for a different graphon.

In terms of the problem formulation, we further assume there exists an underlying discrete structure that has a significant physical connotation, e.g., atoms in electron density. The datasets on electron density we experimented with are real-world data and are significantly larger than those used in previous graphon-related work. Based on the above difference in data structure and tasks, we designed a new network architecture that is different from the work before. We approximated the graphon convolution with neural nets on the coefficients instead of the feature functions themselves, and we also proposed the residual operator layer to mitigate the finite approximation error. Also, we extended the definition of rotation equivariance to continuous functions and provided rigid proof that our model achieves such a desirable property by design.

In conclusion, our work should be viewed as parallel to the existing work on graphons. We are also aware of the values of previous theoretical work. As our model still followed the framework on estimating and parameterizing the integral in Eq.\ref{eqn:fredholm}, the theoretical results on convergence and transferability could be adapted for our model to make it more concrete and solid.

\section{Datasets}\label{sec:data}
In this section, we provide more details about the datasets we used in the experiments.

\textbf{QM9}. The densities are calculated with VASP using the Perdew–Burke-Ernzerhof (PBE) functional \cite{DBLP:journals/corr/abs-2011-03346dft,Jørgensen2022}. The grid coordinates are guaranteed to be orthogonal but are generally different for different molecules.

\textbf{Cubic}. The densities are calculated with VASP using the projector augmented wave (PAW) method \cite{Wang2022}. As the crystal structure satisfies the periodic boundary condition (pbc), the volumetric data are given for a primitive cell with translation periodicity. We only focus on the total charge density and ignore the spin density. Note that though all materials belong to the cubic crystal system, some of the face-center cubic (fcc) structures are given in its non-orthogonal primitive rhombohedral cell.

\textbf{MD}. Densities from \cite{Bogojeski2020} are calculated with the PBE XC functional; densities from \cite{DBLP:journals/corr/BrockherdeLBM16} are calculated with Quantum ESPRESSO using the PBE functional. Both datasets are simulated in a cubic box with a length of 20 Bohr and a uniform grid size of $50^3$. The result volumetric density is represented in Fourier basis, so we first converted it into the Cartesian form for all models. 

The raw data of these datasets come in different formats. We defined a unified data interface to facilitate experiments and easy extension to other datasets. A data point consists of the following data fields for training and inference:
\begin{itemize}
    \item \texttt{atom\_type}: Atom types of size $N$.
    \item \texttt{atom\_coord}: Atom coordinates of size $(N,3)$.
    \item \texttt{density}: Voxelized density value of size $N_x\times N_y\times N_z$. Note that it was flattened to have the order of X-Y-Z.
    \item \texttt{grid\_coord}: Coordinates of the grid points where densities are sampled, with size $(N_x\times N_y\times N_z,3)$.
    \item \texttt{shape}: A 3D vector representing the discretization sizes of each of the three dimensions.
    \item \texttt{cell}:  A 3-by-3 matrix representing the cell vectors.
\end{itemize}

Other common statistics of the datasets are summarized in Table \ref{tab:data}. The MD dataset does not have a validation split. The number of grids and grid lengths is for a single dimension so the number of voxels scales cubically with respect to it.

\begin{table}[ht]
\centering
\caption{Dataset details.} \label{tab:data}
\begin{tabular}{@{}lccc@{}}
\toprule
Dataset & \textbf{QM9} & \textbf{Cubic} & \textbf{MD} \\ \midrule
train/val/test split & 123835/50/10000 & 14421/1000/1000 & 1000(2000)/500(400) \\
max/min/mean \#grid & 160/40/87.86 & 448/32/93.97 & 20/20/20 \\
max/min/mean \#node & 29/3/17.98 & 64/1/10.49 & 14/8/10.83 \\
max/min/mean length (Bohr) & 15.83/4.00/8.65 & 26.20/1.78/5.82 & 20.00/20.00/20.00 \\
\#node type & 5 & 84 & 3 \\ \bottomrule
\end{tabular}
\end{table}

\section{Model and Training Specification}\label{sec:spec}
In this section, we provide our model specifications as well as the baseline model specifications. Training- and testing-related hyperparameters used in the experiments are also provided.

\subsection{Model Specification}
We provide more details about the proposed InfGCN model and the baseline models. The baseline models' architectures are briefly described and major hyperparameters are provided. The model sizes provided here are for QM9.

\textbf{InfGCN}. We used spherical degree $\ell\le 7$ and the number of radial bases $n=16$ with the Gaussian parameters $a_k$ starting at 0.5 Bohr and ending at 5.0 Bohr. The distance was first embedded in a 64-dimensional vector and went through two fully-connected layers with a hidden size of 128. We used 3 InfGCN layers. This model has 1.20M trainable parameters and was used for all datasets.

\textbf{CNN} \cite{DBLP:conf/miccai/RonnebergerFB15,DBLP:conf/miccai/CicekALBR16}. We used a 3D-CNN with the U-Net architecture which has been successful in biomedical imaging tasks. CNN is generally not rotation equivariant. As the density grids in the datasets are not necessarily the same, we manually injected the grid information by pre-computing the initial feature map on the grid points as:
\begin{equation}
    f_k(\mathbf{x})=\sum_{u\in \mathcal{V}} \exp\left(-a_k\frac{|\mathbf{x}-\mathbf{x}_u|^2}{r_u}\right) \label{eqn:init_feat}
\end{equation}
where $r_u$ is the covalent radius of atom $u$ and $\{a_k\}$ are pre-defined Gaussian parameters that contribute to the feature channel. The initial feature map was built with 16 Gaussian parameters $a_k$ starting at 0.5 Bohr and ending at 5.0 Bohr. We used a 3-level U-Net with 32, 64, and 128 feature channels, respectively. The resultant model has 990k trainable parameters.

The following 5 baselines are \emph{interpolation nets}, as they take query coordinates and try to interpolate them from the node (atom) information.

\textbf{DeepDFT} \cite{DBLP:journals/corr/abs-2011-03346dft} and \textbf{DeepDFT2}\cite{Jørgensen2022}. DeepDFT is a GNN-based network that models the interaction between the atom vertices and the query vertices for which the charge density is predicted. As DeepDFT only takes the invariant features of atom types and edge distance as input, it is also globally equivariant. DeepDFT2 uses PaiNN \cite{DBLP:conf/icml/SchuttUG21} as the GNN architecture. PaiNN designs equivariant interaction between scalar and vectorial features. Therefore, DeepDFT2 is locally equivariant. We simply followed the original model architectures which gave models of 2.04M and 2.93M trainable parameters, respectively.

\textbf{EGNN} \cite{DBLP:conf/icml/SatorrasHW21}. EGNN defines an equivariant message passing based on invariant edge features like distance embedding. We used 4 layers of EGNN with an input dimension of 128 and hidden and output dimension of 256, resulting in a larger model than the original EGNN paper. We also added a query-specific residual GNN similar to InfGCN. The model has 2.27M trainable parameters.

\textbf{DimeNet} \cite{DBLP:conf/iclr/KlicperaGG20} and \textbf{DimeNet++} \cite{gasteiger_dimenetpp_2020}. DimeNet uses spherical 2D Fourier-Bessel basis functions (2D analogs to spherical harmonics) to embed bond angles, hoping to capture geometric information about the interaction between atoms. They have achieved SOTA performance on physical property prediction. We slightly modified the original models to output a 128-dimensional feature for each atom and added a query-specific residual GNN similar to InfGCN. All the other model hyperparameters are the same as the original models. As a result, the two models have 2.31M and 2.02M parameters, respectively.

The following 3 baselines are \emph{neural operators}. They directly try to parameterize the Fredholm operator in Eq.\eqref{eqn:fredholm} using various approaches. Same as CNN, they cannot automatically capture the grid information, so we also use the feature in Eq.\eqref{eqn:init_feat} as the initial feature. The initial feature map for these models is built with 32 Gaussian parameters $a_k$ starting at 0.5 Bohr and ending at 5.0 Bohr. For all NOs, a sampling training and inference scheme is utilized. We will discuss it in detail in the next subsection.

\textbf{GNO} \cite{DBLP:journals/corr/abs-2003-03485gkn}. The Graph Neural Operator (referred to as Graph Kernel Network or GKN in the original paper) tries to parameterize the Fredholm operator in Eq.\eqref{eqn:fredholm} with the message passing on Monte Carlo sampled random subgraphs:
\begin{equation}
    f(\mathbf{x})\gets \sigma\left(Wf(\mathbf{x})+\frac{1}{|\mathcal{N}(u)|}\sum_{\mathcal{N}(u)}\mathcal{F}(\mathbf{x},\mathbf{y},f(\mathbf{x}),f(\mathbf{y}))f(\mathbf{y})\right)
\end{equation}
where $\mathcal{F}$ is a neural net. Note that GKN is neither translation equivariant nor rotation equivariant. We used a feature size of 384 and stacked 4 convolutional layers. The cut-off distance was 5.0 Bohr for all datasets. The resultant model has 1.84M trainable parameters.

\textbf{FNO} \cite{DBLP:conf/iclr/LiKALBSA21}. The Fourier Neural Operator does the parameterization in the Fourier domain:
\begin{equation}
    f(\mathbf{x})\gets \sigma\left(Wf(\mathbf{x})+\mathcal{F}^{-1}R(\mathcal{F}f)(\mathbf{x})\right)
\end{equation}
where $\mathcal{F},\mathcal{F}^{-1}$ are the Fourier transform and inverse Fourier transform over the sampled data points, and $W,R$ are learnable parameter matrices. For the parameterization in the Fourier domain, only a fixed number of low-frequency Fourier modes are kept for efficiency. We used a feature size of 128 with a number of Fourier modes of 128 and stacked 4 layers.
The cut-off distance was 5.0 Bohr for all datasets. The resultant model has 33.63M trainable parameters. 

\textbf{LNO} \cite{Lu_2021}. The Linear Neural Operator is based on the low-rank decomposition of the kernel $W(\mathbf{x},\mathbf{y}):=\sum_{j=1}^r \phi_j(\mathbf{x})\psi_j(\mathbf{y})$, similar to the unstacked DeepONet proposed in \cite{Lu_2021}. We used a feature size of 384 with a rank of 64 and stacked 4 layers. The cut-off distance was 5.0 Bohr for all datasets. The resultant model has 803k trainable parameters.

To facilitate model training and evaluation, we also defined a unified model interface such that each model takes the atom types, atom coordinates as defined above, and the sampled densities and sampled grid coordinates which we will cover below. The model outputs predicted density values at each sampled grid point.

\subsection{Training Specification}
We followed \cite{DBLP:journals/corr/abs-2011-03346dft} to use a sampling training scheme, and we also adapted it for every model except for 3D-CNN. During training and validation, only a small portion of the grid is randomly sampled with the corresponding density values. This scheme drastically reduces the required GPU memory, as there were cases when the whole voxel could not fit into a single GPU. During inference, however, all voxels were partitioned into mini-batches for a full evaluation to avoid randomness for a more convincing result. The 3D-CNN model required the whole voxel information, so the sampling scheme was not used.

As was demonstrated in \cite{DBLP:journals/corr/abs-2003-03485gkn}, this sampling scheme can be best understood as Nyström approximation of the integral in Eq.\eqref{eqn:fredholm}. The original FNO and LNO models used the whole grid points for the estimation of the integral (Monte Carlo approximation). This is one of the major reasons that these models cannot scale to 3D voxel data. In our experiment, FNO and LNO would cause OOM even for the QM9 dataset with a batch size of 1.
 
The training and testing specifications are provided in Table \ref{tab:train}. The cut-off in Bohr refers to the cut-off distance for building the atom-atom graph and the atom-query graph for InfGCN and interpolation nets. All training was done on a single NVIDIA A100 GPU. For efficiency, the testing for QM9 was done on the last 1600 samples, so larger molecules were tested. For Cubic and MD, testing was done on all the test samples.

\begin{table}[ht]
\centering
\caption{Training specifications.} \label{tab:train}
\vspace{-0.6em}
\resizebox{\linewidth}{!}{
\begin{tabular}{@{}llcccccccc@{}}
\toprule
Model & Dataset & cutoff & n\_iter & lr & patience & batch\_size & lr\_decay & train\_sample & inf\_sample \\ \midrule
\multirow{3}{*}{\textbf{InfGCN}} & QM9 & 3.0 & 40k & 1e-3 & 10 & 64 & \multirow{3}{*}{0.5} & \multirow{3}{*}{1024} & \multirow{3}{*}{4096} \\
 & Cubic & 5.0 & 10k & 5e-3 & 5 & 32 &  &  &  \\
 & MD & 3.0 & 2k & 5e-3 & 5 & 64 &  &  &  \\ \midrule
\multirow{2}{*}{\textbf{CNN}} & QM9 & \multirow{2}{*}{NA} & 100k & 3e-4 & 10 & 4 & \multirow{2}{*}{0.5} & \multirow{2}{*}{NA} & \multirow{2}{*}{NA} \\
 & MD &  & 4k & 1e-3 & 5 & 32 &  &  &  \\ \midrule
\multirow{3}{*}{\begin{tabular}[c]{@{}l@{}}\textbf{DeepDFT}/\\ \textbf{DeepDFT2}\end{tabular}} & QM9 & 3.0 & 40k & 3e-4 & 10 & 64 & \multirow{3}{*}{0.5} & \multirow{3}{*}{1024} & \multirow{3}{*}{4096} \\
 & Cubic & 5.0 & 10k & 3e-4 & 10 & 32 &  &  &  \\
 & MD & 3.0 & 2k & 1e-3 & 5 & 64 &  &  &  \\ \midrule
\multirow{3}{*}{\begin{tabular}[c]{@{}l@{}}\textbf{EGNN}/\\\textbf{DimeNet}/\\ \textbf{DimeNet++}\end{tabular}} & QM9 & 5.0 & 40k & 3e-4 & 10 & \multirow{3}{*}{64} & \multirow{3}{*}{0.5} & \multirow{3}{*}{1024} & \multirow{3}{*}{4096} \\
 & Cubic & 5.0 & 10k & 3e-4 & 10 &  &  &  &  \\
 & MD & 3.0 & 2k & 1e-3 & 5 &  &  &  &  \\ \midrule
\multirow{3}{*}{\begin{tabular}[c]{@{}l@{}}\textbf{GNO}/\\ \textbf{FNO}/\\ \textbf{LNO}\end{tabular}} & QM9 & \multirow{3}{*}{NA} & 80k & 3e-4 & 10 & \multirow{3}{*}{32} & \multirow{3}{*}{0.5} & \multirow{3}{*}{1024} & \multirow{3}{*}{4096} \\
 & Cubic &  & 10k & 3e-4 & 10 &  &  &  &  \\
 & MD &  & 2k & 1e-3 & 5 &  &  &  &  \\ \bottomrule
\end{tabular}
}
\end{table}

\subsection{Complexity Analysis}
The naïve implementation of a message-passing layer with padding and masking scales to $O(|\mathcal{E}|C(\ell+1)^6)$ where $|\mathcal{E}|$ is the number of edges and $C$ is the number of channels. This is because the message passing step involves a tensor product of two spherical tensors and the Clebsch-Gordan coefficients have their six indices all summed. However, note that there are only $(\ell+1)^2$ spherical harmonics with the degree up to $\ell$. If coefficients are compressed into one long vector, the complexity can be reduced to $O(|\mathcal{E}|C(\ell+1)^4)$. During the expansion of the basis functions on the voxels, the time complexity is $O(|\mathcal{E}|KC(\ell+1)^2)$ where $K$ is the number of voxels sampled. In practice, we noticed that a small $\ell$ suffices so that $(\ell+1)^4$ can be viewed as a constant ($\ell=7$ corresponds to 4096). Also, the Clebsch-Gordan coefficients can be pre-computed and stored, and the summation can be efficiently done by index manipulation. Our implementation was based on the \texttt{e3nn}\footnote{\url{https://e3nn.org/}} package which implements efficient spherical vector manipulations.

In comparison, most GNN-based layers scale as $O(|\mathcal{E}|D^2)$ where $D$ is the hidden feature size. Therefore, in our standard-setting ($C=16,\ell=7$), the time complexity is approximately of the same order (with $D=256$). For GNO, FNO, and LNO, one layer scales as $O(KD^2),O(KD^3),O(KD^2R)$, respectively. The additional coefficients are for the Fourier modes or the rank. For 3D-CNN, the time complexity scales as $O(C_{\text{in}}C_{\text{out}}N_xN_yN_zk^3)$ where $k$ is the kernel size. This is significantly larger than any of the GNN-based methods, as the whole voxel needs to be encoded and decoded. In practice, we found the interpolation nets ran slightly quicker than InfGCN and NOs, but our proposed InfGCN was able to achieve better performance.

\section{Additional Results}\label{sec:add_res}

In this section, we provide more visualization results on the QM9 and Cubic datasets to further study the generalizability of InfGCN. For the Cubic dataset, we further provide a sample on the cubic primitive cell in Figure \ref{fig:res_cubic}.
\begin{figure}[ht]
    \centering
    \includegraphics[width=\linewidth]{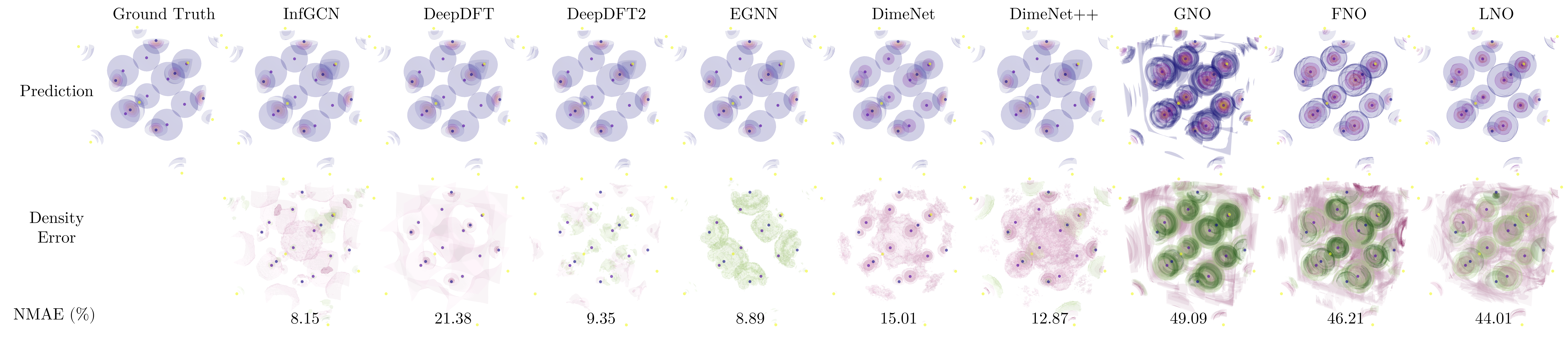}
    \vspace{-1.5em}
    \caption{Visualization of the predicted density, the density error, and the NMAE for \ce{Rb_8AsO_3} (mp-1103058 in Cubic).}
    \label{fig:res_cubic}
\end{figure}

For the density visualization, we used linearly spaced values from 0.05 to 3.5 with 5 isosurfaces for the ground truth density and -0.03 (deep pink), -0.01, 0.01, and 0.03 (deep green) for the density errors for QM9 in Figure \ref{fig:res_plot}. We used values of 0.3, 1, 3, and 8 for the ground truth density and -0.3, -0.1, 0.1, and 0.3 for the density errors for the rhombic primitive cell in Figure \ref{fig:res_plot}. We used linearly spaced values from 0.5 to 8.0 with 5 isosurfaces for the ground truth density and -0.3, -0.1, 0.1, and 0.3 for the density errors for Cubic in Figure \ref{fig:res_cubic}.

As QM9 covers a broad range of different types of molecules, we manually picked some representative molecules and evaluated all the models. The results are provided in Table \ref{tblr:res}.
In order to provide a finer-grained comparison between different models, we used finer-grained isosurfaces with values of -0.03, -0.01, 0.01, and 0.03, respectively. The corresponding NMAE (\%) is also provided below the plot. Molecule names, their corresponding file IDs, chemical structures, and ground truth densities are also provided in the table.

The selected molecules cover a variety of chemical types, ranging from alkane, alcohol, and ester to aromatic heterocyclic compounds. InfGCN has a significant advantage over other baselines, demonstrating its generalizability across different molecules.
We also observe some patterns of the density estimation error:
\begin{itemize}
\item All models performed better on alkanes or the alkyl part of the molecules, e.g., the linear nonane, the branched \textit{t}-butyl group, and the cyclic cyclohexane. One exception is cubane, which adopts an unusually sharp 90° bonding angle that would be highly strained as compared to the 109.45° angle of a tetrahedral carbon. In this example, InfGCN was still able to make a relatively accurate prediction.
    
\item The predictions were worse for atoms with high electronegativity (F, O, N), even if the errors are normalized by the number of electrons. From a chemical aspect, a higher density will lead to a better ``polarizing'' ability to distort the electron cloud of the covalent atom, leading to more complicated higher-order interactions between atomic orbitals \footnote{The electron cloud actually won't undergo any polarizing or hybridization procedure, but it is still beneficial to think of it in this way, as it provides an intuitive interpretation of the linear combination of atomic orbitals (LCAO). In practice, such ideas are still widely used in chemistry.}. For example, the carbon atom in \ce{CF_4} has a significant positive partial charge, but DeepDFT overestimated its density (in pink). Noticeably, InfGCN can estimate the oxygen density with great accuracy, e.g., in glycerol, di-\textit{t}-butyl ether, and isoamyl acetate.

\begin{figure}[ht]
    \centering
    \includegraphics[width=0.5\linewidth]{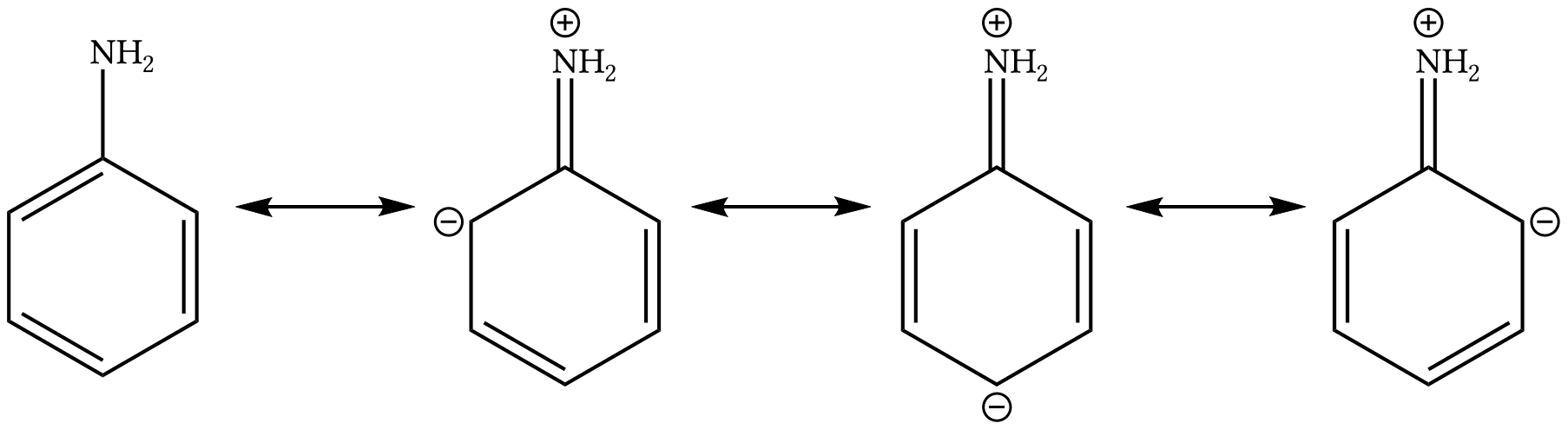}
    \includegraphics[width=0.5\linewidth]{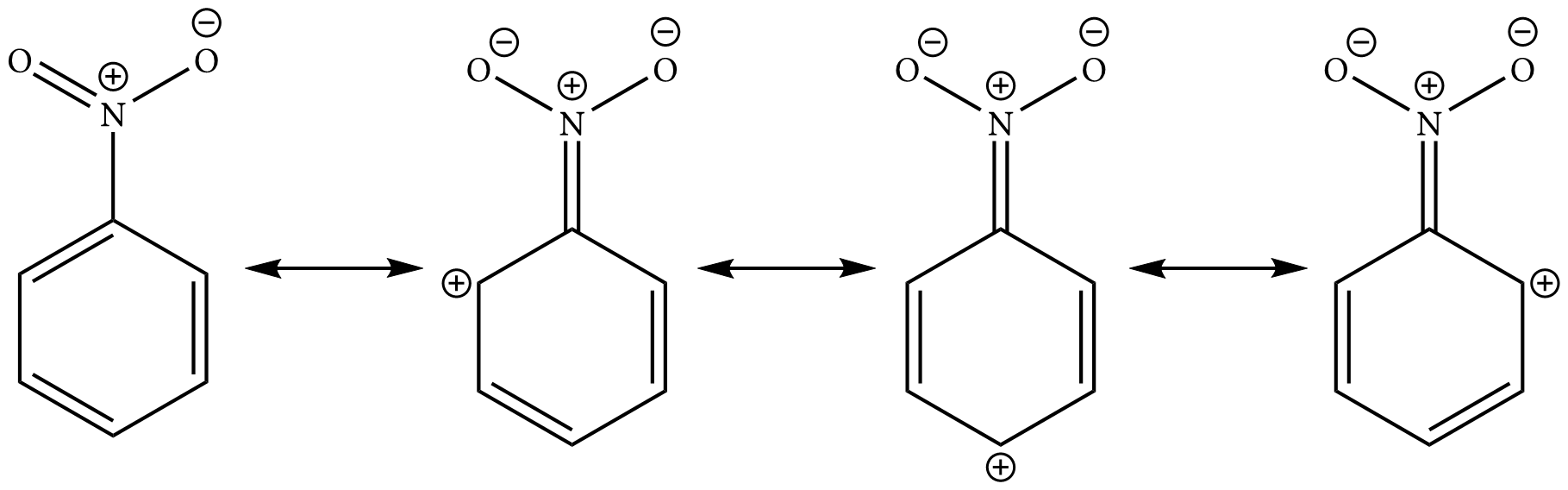}
    \caption{Resonance forms of aniline and nitrobenzene with formal charges.}
    \label{fig:reson}
\end{figure}

\item The predictions were even worse for a conjugated and aromatic system where electrons are highly delocalized\footnote{The term \emph{delocalized} is also not accurate for describing electron density. However, as mentioned above, it is a convenient way of building molecular orbitals from atomic orbitals, and so are the concepts of EDG and EWG mentioned in the following text.}. Delocalization of electrons allows for long-distance interactions which are harder to capture.
The presence of electron-donating groups (EDGs) and electron-withdrawing groups (EWGs) contributes greatly to the conjugated system. For example, the amino group \ce{-NH_2} is an EDG when bonding to a conjugated system like benzene, facilitating \textit{ortho-} and \textit{para-}electrophilic reactions. In contrast, the nitro group \ce{-NO_2} is an EWG that facilitates \textit{ortho-} and \textit{para-}nucleophilic reactions (See Fig.\ref{fig:reson}). It can be seen from the visualization that the \textit{ortho-} and \textit{para-}positions of aniline are underestimated (in green) and those of nitrobenzene are overestimated (in pink) with DeepDFT and other models. For cytosine and fluorouracil, the amide (lactam) tautomeric form predominates at pH 7, further making the electron structures more complicated to achieve accurate predictions. 
More examples of the conjugated systems include the nitro group itself where the density of oxygen is overestimated and the amide group in asparagine where the density of the amide oxygen is underestimated and that of nitrogen is overestimated.

\end{itemize}


\addtolength{\leftskip}{-1.2cm}
\begin{longtblr}[
    caption = {More results on QM9.},
    label = {tblr:res},
]{
    columns={leftsep=3pt, font={\footnotesize}},
    width=\linewidth,
    colspec = {X[l,wd=35px]X[l,wd=20px]X[c,wd=25px]X[c,wd=360px]},
    rows = {m},
}
\hline[1pt]
Name    & File ID & Structure & \makebox[30px]{\begin{tabular}[c]{@{}c@{}}Ground\\Truth\end{tabular}} \makebox[30px]{InfGCN} \makebox[30px]{CNN} \makebox[30px]{\resize{30px}{DeepDFT}} \makebox[30px]{\resize{30px}{DeepDFT2}} \makebox[30px]{EGNN} \makebox[30px]{\resize{30px}{DimeNet}} \makebox[30px]{\resize{30px}{DimeNet++}} \makebox[30px]{GNO} \makebox[30px]{FNO} \makebox[30px]{LNO} \\ \hline

ammonia & 2 & \ce{NH_3} & \centering\includegraphics[width=\linewidth, valign=c]{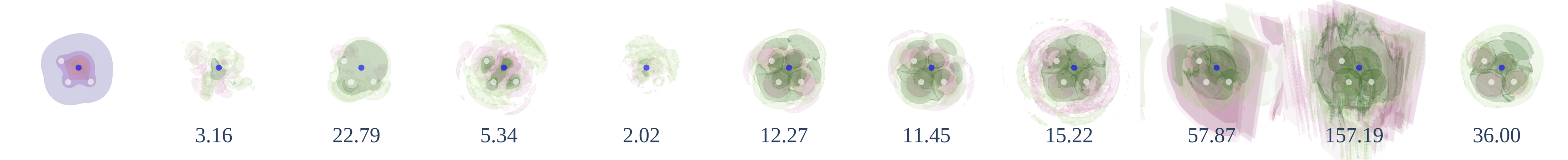} \\ 

urea & 20 & \centering\includegraphics[width=\linewidth, valign=c]{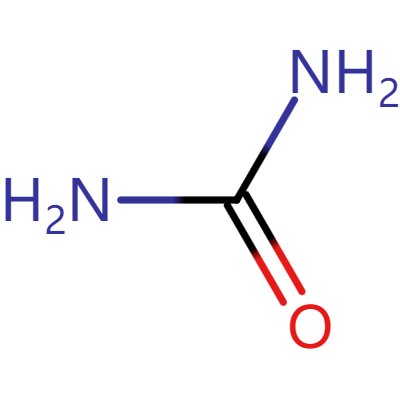} & \centering\includegraphics[width=\linewidth, valign=c]{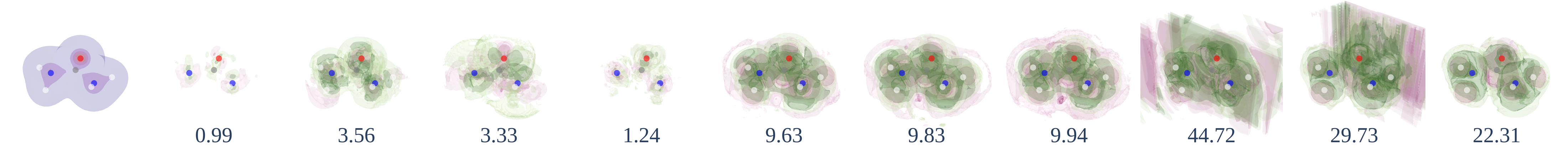} \\ 

acetone oxime & 49 & \includegraphics[width=\linewidth, valign=c]{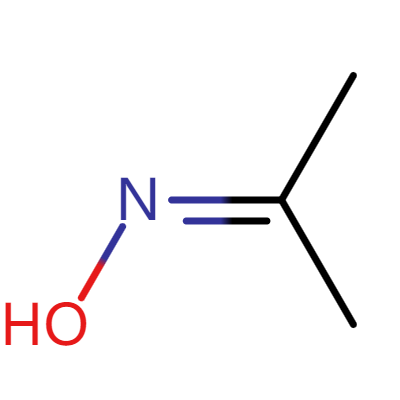} & \centering\includegraphics[width=\linewidth, valign=c]{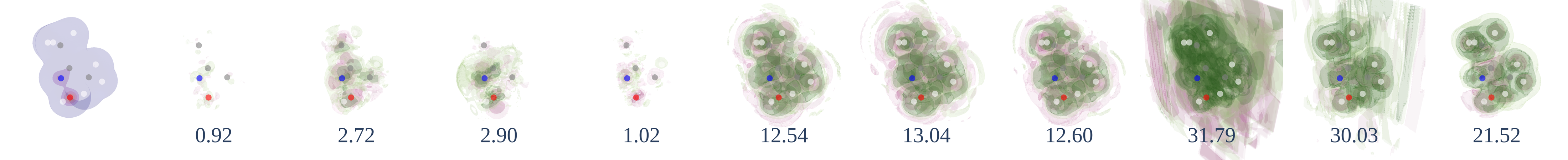} \\ 

furan & 52 & \includegraphics[width=0.5\linewidth, valign=c]{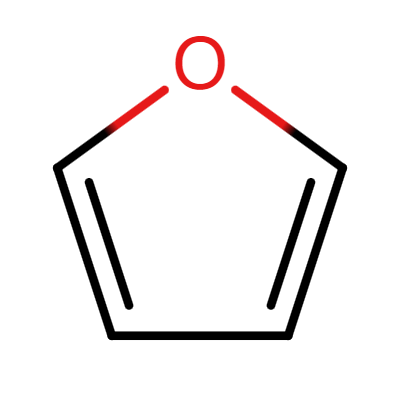} & \includegraphics[width=\linewidth, valign=c]{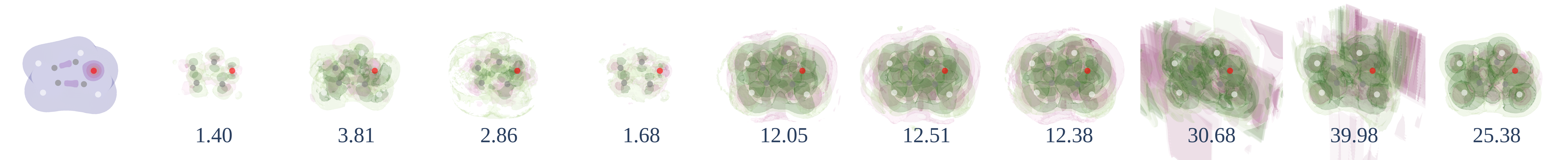} \\

tetrafluoro-methane & 184 & \ce{CF_4} & \includegraphics[width=\linewidth, valign=c]{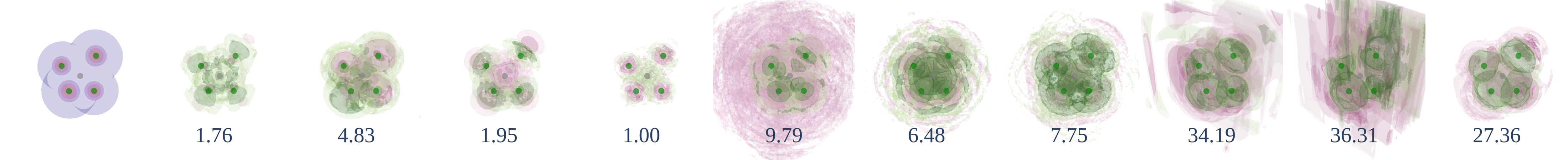} \\

glycerol & 397 & \includegraphics[width=\linewidth, valign=c]{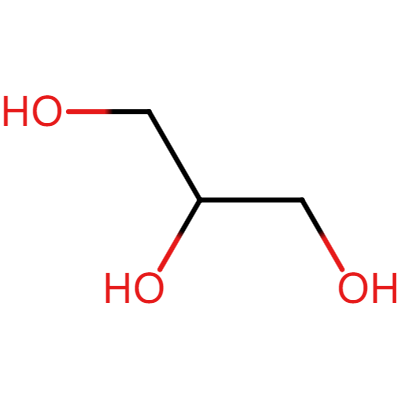} & \includegraphics[width=\linewidth, valign=c]{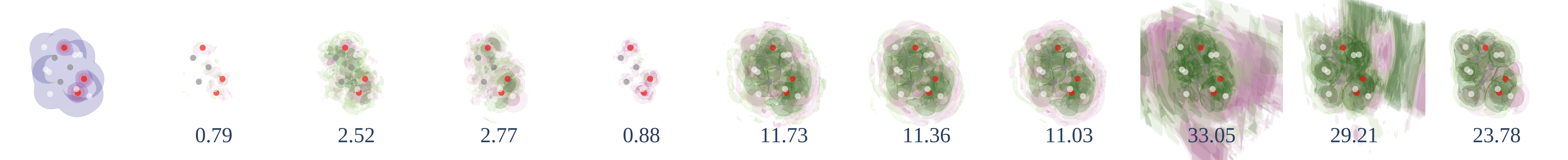} \\

\resize{30px}{cyclohexane} & 658 & \includegraphics[width=0.5\linewidth, valign=c]{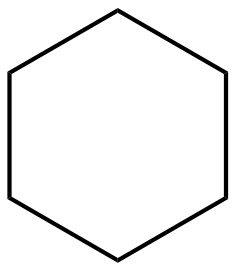} & \includegraphics[width=\linewidth, valign=c]{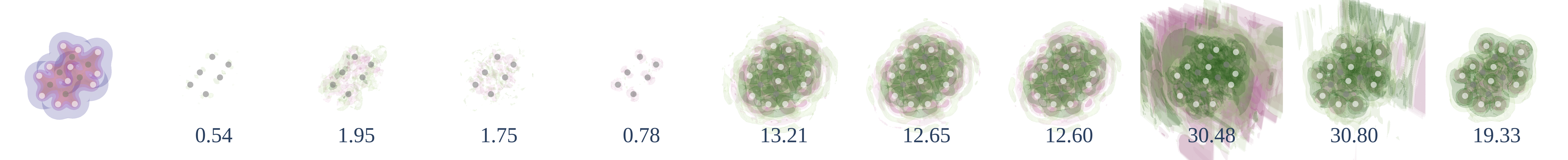} \\

aniline & 940 & \includegraphics[width=\linewidth, valign=c]{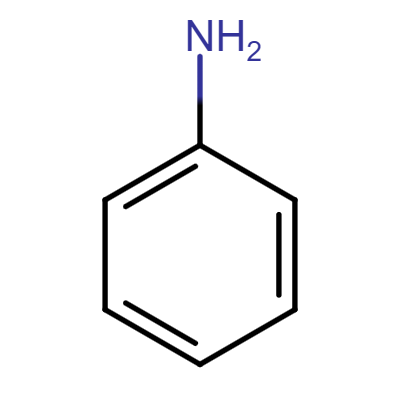} & \includegraphics[width=\linewidth, valign=c]{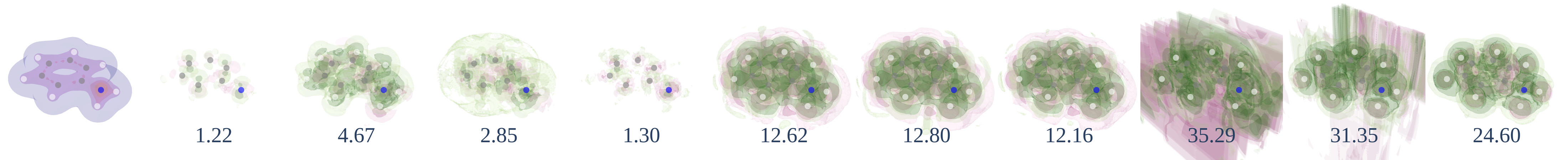} \\

cytosine & 4318 & \includegraphics[width=\linewidth, valign=c]{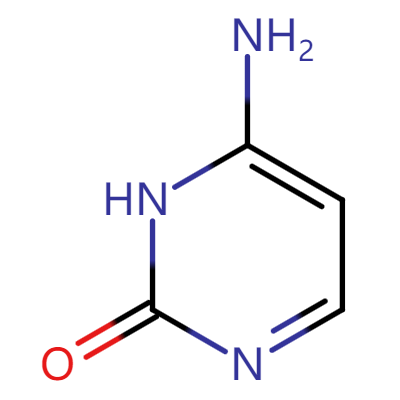} & \includegraphics[width=\linewidth, valign=c]{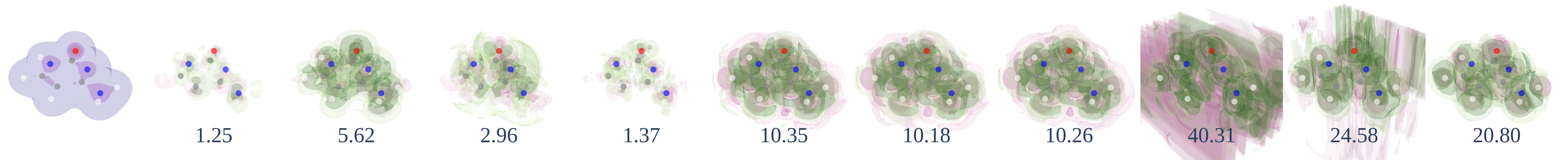} \\

cubane & 19116 & \includegraphics[width=0.6\linewidth, valign=c]{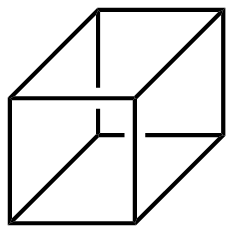} & \includegraphics[width=\linewidth, valign=c]{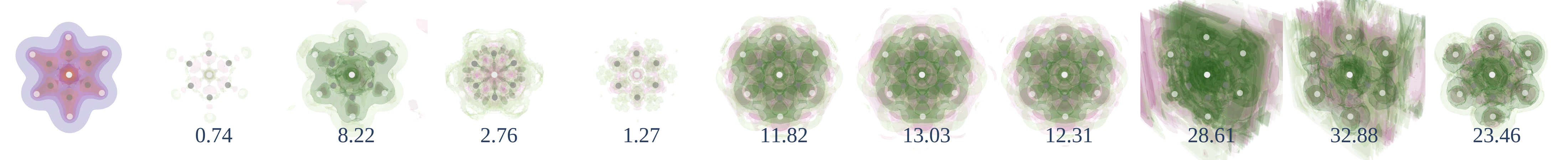} \\

purine & 24537 & \includegraphics[width=\linewidth, valign=c]{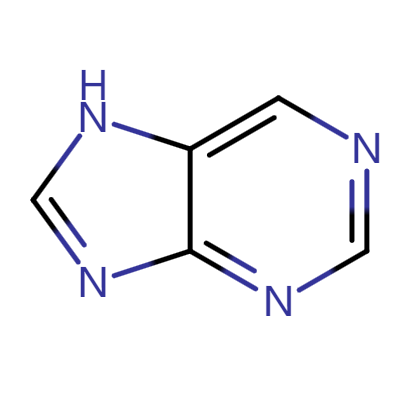} & \includegraphics[width=\linewidth, valign=c]{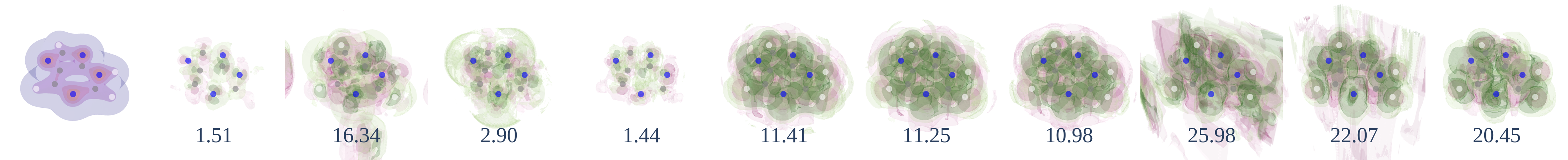} \\

di-\textit{t}-butyl ether & 57520 & \includegraphics[width=\linewidth, valign=c]{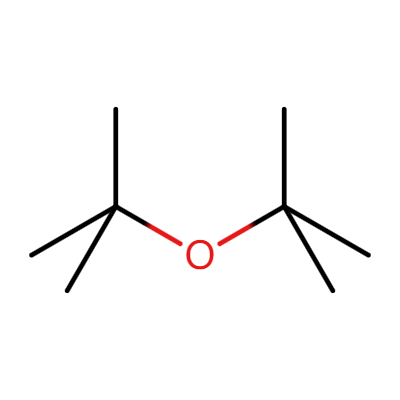} & \includegraphics[width=\linewidth, valign=c]{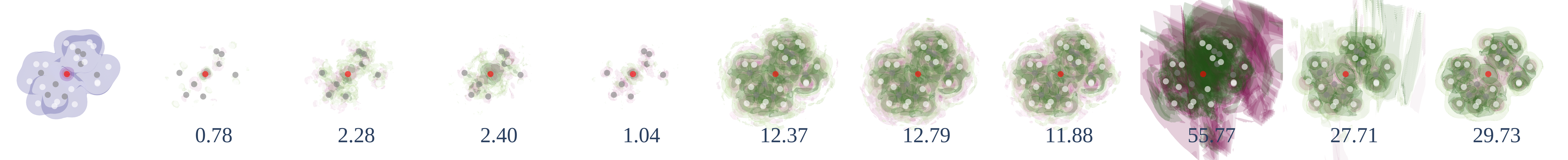} \\

isoamyl acetate & 60424 & \includegraphics[width=\linewidth, valign=c]{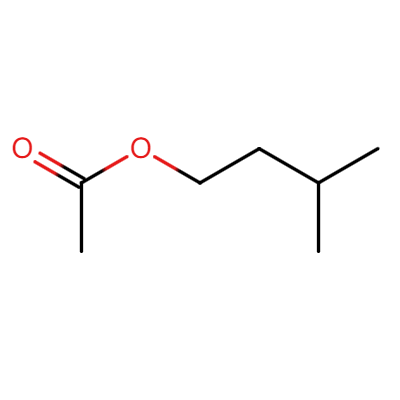} & \includegraphics[width=\linewidth, valign=c]{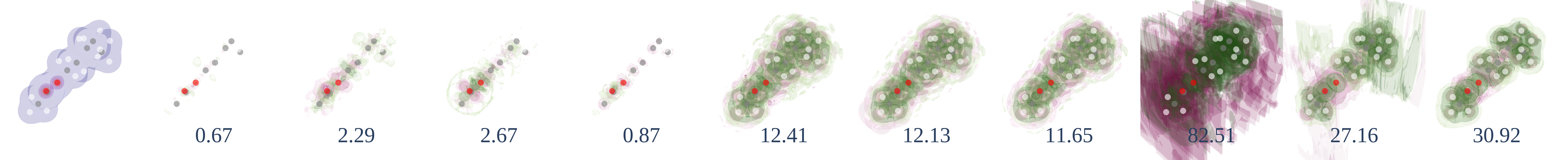} \\

\resize{30px}{asparagine} & 61439 & \includegraphics[width=\linewidth, valign=c]{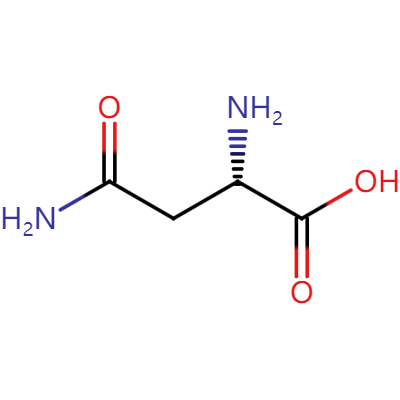} & \includegraphics[width=\linewidth, valign=c]{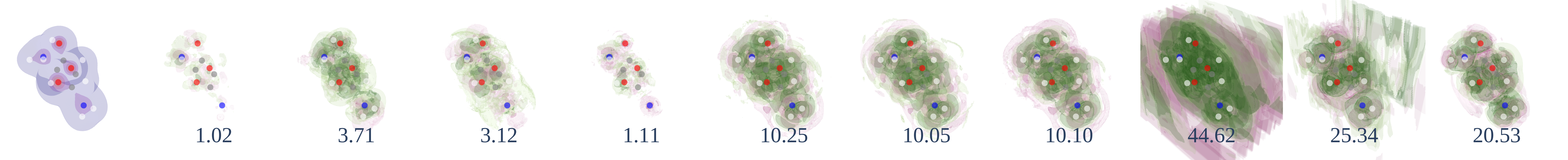} \\

nonane & 114514 & \includegraphics[width=\linewidth, valign=c]{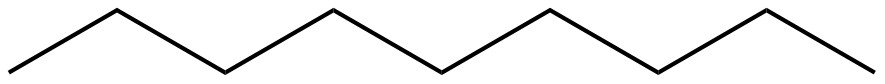} & \includegraphics[width=\linewidth, valign=c]{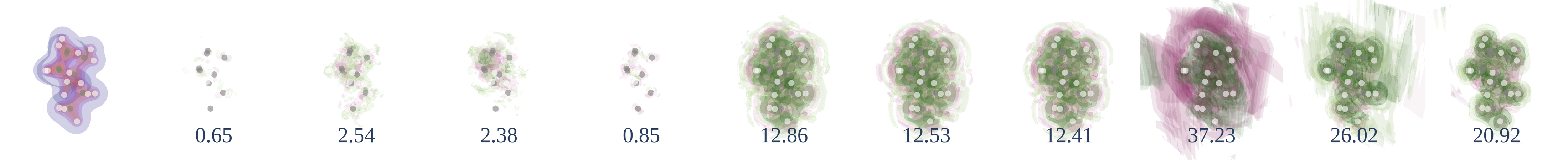} \\

\resize{30px}{nitrobenzene} & 131915 & \includegraphics[width=\linewidth, valign=c]{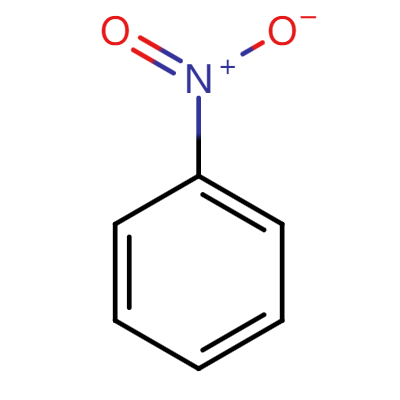} & \includegraphics[width=\linewidth, valign=c]{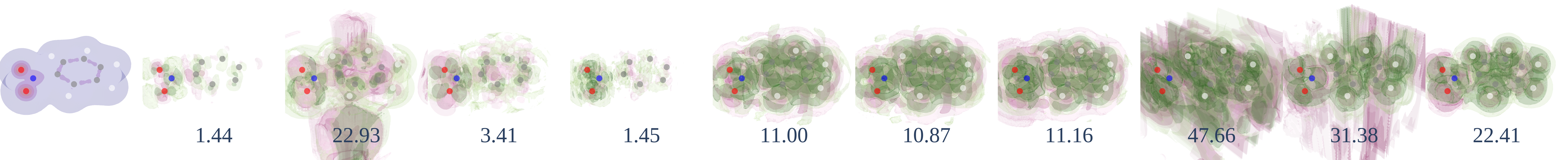} \\

\resize{30px}{fluorouracil} & 132701 & \includegraphics[width=\linewidth, valign=c]{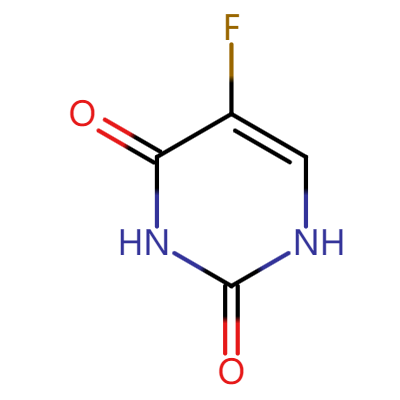} & \includegraphics[width=\linewidth, valign=c]{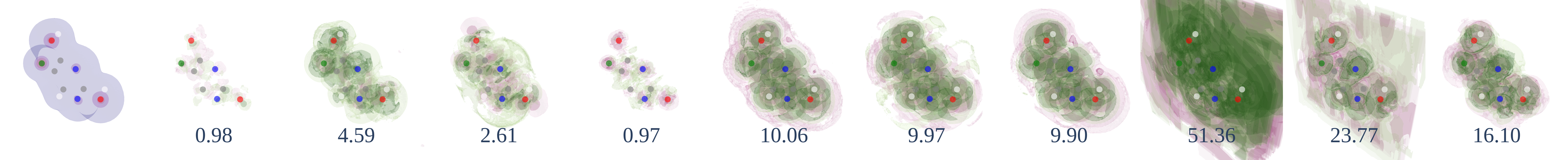} \\ 
\hline[1pt]
\end{longtblr}

\end{document}